\newtheorem{thm}{Theorem}
\declaretheorem[sibling=thm]{problem}
\newcommand{\mlstinline}[1]{\text{\lstinline|#1|}}
\newcommand{\strong}[1]{\@strong{#1}}
\newcommand{\@@strong}[1]{\textbf{\let\@strong\@@@strong#1}}
\newcommand{\@@@strong}[1]{\textnormal{\let\@strong\@@strong#1}}
\let\@strong\@@strong
\newcommand{\N}{\mathbb{N}}
\newcommand{\union}{\mathbin{\cup}}
\newcommand{\intersection}{\mathbin{\cap}}
\newcommand{\leads}{\coloneq}
\newcommand{\drel}[1]{\mathrel{\textsf{#1}}}
\newcommand{\sdrel}[1]{{\small \drel{#1}}}
\newcommand{\fdrel}[1]{{\footnotesize \drel{#1}}}
\newcommand{\dconst}[1]{\texttt{#1}}
\newcommand{\sdconst}[1]{{\small \dconst{#1}}}
\newcommand{\fdconst}[1]{{\footnotesize \dconst{#1}}}
\newcommand{\Difflog}{{\small \textsc{Difflog}}}
\newcommand{\alps}{{\small \textsc{Alps}}}
\newcommand{\Evaluate}{{\small \textsc{Evaluate}}}
\newcommand\downcast{{\tt downcast}}
\newcommand\andersen{{\tt andersen}}
\newcommand\twocallsite{{\tt 2-call-site}}
\begin{document}
\title{Synthesizing Datalog Programs Using Numerical Relaxation}
\author{
  Xujie Si \and
  Mukund Raghothaman \and
  Kihong Heo \And
  Mayur Naik\\
  \affiliations University of Pennsylvania \\
  \emails \{xsi, rmukund, kheo, mhnaik\}@cis.upenn.edu
}
\maketitle
\begin{abstract}
The problem of learning logical rules from examples arises in diverse fields, including program synthesis, logic
programming, and machine learning. Existing approaches either involve solving computationally difficult combinatorial
problems, or performing parameter estimation in complex statistical models.

In this paper, we present \Difflog, a technique to extend the logic programming language Datalog to the continuous
setting. By attaching real-valued weights to individual rules of a Datalog program, we naturally associate numerical
values with individual conclusions of the program. Analogous to the strategy of numerical relaxation in optimization
problems, we can now first determine the rule weights which cause the best agreement between the training labels and the
induced values of output tuples, and subsequently recover the classical discrete-valued target program from the
continuous optimum.

We evaluate \Difflog{} on a suite of 34~benchmark problems from recent literature in knowledge discovery, formal
verification, and database query-by-example, and demonstrate significant improvements in learning complex programs with
recursive rules, invented predicates, and relations of arbitrary arity.
\end{abstract}



\section{Introduction}
\label{sec:Intro}


As a result of its rich expressive power and efficient implementations,
the logic programming language Datalog has witnessed applications in diverse domains such as bioinformatics~\cite{Seo2018},
big-data analytics~\cite{Shkapsky2016}, robotics~\cite{Poole1995}, networking~\cite{Loo2006}, and
formal verification~\cite{Bravenboer2009}. Users on the other hand are often unfamiliar with
logic programming.  The programming-by-example (PBE) paradigm aims to bridge this gap by
providing an intuitive interface for non-expert users~\cite{Gulwani2011}.


Typically, a PBE system is given a set of input tuples and sets of desirable and undesirable output tuples.
The central computational problem is that of synthesizing a Datalog program, i.e., a set of logical inference
rules which produces, from the input tuples, a set of conclusions which is compatible with the output tuples.
Previous approaches to this problem focus on optimizing the combinatorial exploration of the search space.
For example, \alps{} maintains a small set of syntactically most-general and most-specific candidate
programs~\cite{ALPS}, Zaatar encodes the derivation of output tuples as a SAT formula for subsequent solving by
a constraint solver~\cite{Zaatar}, and inductive logic programming (ILP) systems employ sophisticated pruning
algorithms based on ideas such as inverse entailment~\cite{Progol}.
Given the computational complexity of the search problem, however, these systems are hindered by large
or difficult problem instances.
Furthermore, these systems have difficulty coping with minor user errors or noise in the training data.



In this paper, we take a fundamentally different approach to the problem of synthesizing Datalog programs.
Inspired by the success of numerical methods in machine learning and other large scale optimization problems,
and of the strategy of relaxation in solving combinatorial problems such as integer linear programming, we extend the
classical discrete semantics of Datalog 
to a continuous setting named \Difflog, where each rule is annotated with a real-valued weight, and the program 
computes a numerical value for each output tuple.
This step can be viewed as an instantiation of the general $K$-relation framework for database provenance 
\cite{SemiringProvenance} with the Viterbi semiring being chosen as the underlying space $K$ of provenance tokens.
We then formalize the program synthesis problem as that of selecting a subset of target rules from a large set of
candidate rules, and thereby uniformly capture various methods of inducing syntactic bias, including
syntax-guided synthesis (SyGuS)~\cite{SyGuS}, and template rules in meta-interpretive learning~\cite{Metagol}.

The synthesis problem thus reduces to that of finding the values of the rule weights which result in the best agreement
between the computed values of the output tuples and their specified values ($1$ for desirable and $0$ for undesirable tuples).
The fundamental NP-hardness of the underlying decision problem manifests as a complex search surface,
with local minima and saddle points.
To overcome these challenges, we devise a hybrid optimization algorithm which combines Newton's root-finding method
with periodic invocations of a simulated annealing search.
Finally, when the optimum value is reached, connections between the semantics of \Difflog{} and Datalog
enable the recovery of a classical discrete-valued Datalog program from the continuous-valued optimum produced
by the optimization algorithm.


A particularly appealing aspect of relaxation-based synthesis is the randomness caused by the choice of the starting
position and of subsequent Monte Carlo iterations. This manifests both as a variety of different solutions to the same
problem, and as a variation in running times. Running many search instances in parallel therefore enables stochastic
speedup of the synthesis process, and allows us to leverage compute clusters in a way that is fundamentally impossible
with deterministic approaches. We have implemented \Difflog{} and evaluate it on a suite of 34~benchmark programs from recent
literature. We demonstrate significant improvements over the state-of-the-art, even while synthesizing complex programs
with recursion, invented predicates, and relations of arbitrary arity.


\paragraph{Contributions.\!\!}
Our work makes the following contributions:
\begin{enumerate}[leftmargin=*,itemsep=0pt]
\item A formulation of the Datalog synthesis problem as that of selecting a set of desired rules.
This formalism generalizes syntax-guided query synthesis and meta-rule guided search.
\item A fundamentally new approach to solving rule selection by numerically minimizing the difference between the
  weighted set of candidate rules and the reference output.
\item An extension of Datalog which also associates output tuples with numerical weights, and which is a continuous
  refinement of the classical semantics.
\item Experiments showing state-of-the-art performance on a suite of diverse benchmark programs from
  recent literature.
\end{enumerate}

\section{Related Work}
\label{sec:Related}





\paragraph{Weighted logical inference.\!\!}
The idea of extending logical inference with weights has been studied by the community in statistical relational learning.
Markov Logic Networks~\cite{MLN,LearnMLN} view a first order formula as a template for generating a Markov random field, where the weight attached to the formula specifies the likelihood of its grounded clauses.
ProbLog~\cite{ProbLog} extends Prolog with probabilistic rules and reduces the inference problem to weighted model counting.
DeepProbLog~\cite{DeepProbLog} further extends ProbLog with neural predicates (e.g., input data can be images).
These frameworks could potentially serve as the central inference component of our framework but we use the
Viterbi semiring due to two critical factors:
\begin{inparaenum}[(\itshape a\upshape)]
\item the exact inference problem of these frameworks is \#P-complete whereas inference in the Viterbi semiring is polynomial; and
\item automatic differentiation cannot easily be achieved without significantly re-engineering these frameworks.
\end{inparaenum}

\paragraph{Inductive logic programming (ILP).\!\!}
The Datalog synthesis problem can be seen as an instance of the classic ILP problem.
Metagol~\cite{Metagol} supports higher-order dyadic Datalog synthesis but the synthesized program can only consist of relations of arity two.
Metagol is built on top of Prolog which limits its scalability, and also introduces issues with non-termination, especially when
predicates have not already been partially ordered by the user.
In contrast, ALPS~\cite{ALPS} builds on top of Z3 fixed point engine and exhibits much better scalability.
Recent works such as NeuralLP~\cite{NeuralLP} and $\partial$ILP~\cite{diffILP} reduce Datalog program synthesis to a
differentiable end-to-end learning process by modeling relation joins as matrix multiplication, which also limits them
to relations of arity two.
NTP~\cite{NTP} constructs a neural network as a learnable proof (or derivation) for each output tuple up to a predefined depth (e.g. $\leq 2$) with a few (e.g. $\leq 4$) templates,
where the neural network could be exponentially large when either the depth or the number of templates grows.
The predefined depth and a small number of templates could significantly limit the richness of learned programs.
Our work seeks to synthesize Datalog programs consisting of relations of arbitrary arity and support rich features like
recursion and predicate invention.

\paragraph{MCMC methods for program synthesis.\!\!}
Markov chain Monte-Carlo (MCMC) methods have also been used for program synthesis.
For example, in STOKE, \cite{STOKE} apply the Metropolis-Hastings algorithm to synthesize efficient loop free programs.
Similarly, \cite{liang:icml10} show that program transformations can be efficiently learned from demonstrations by MCMC
inference.



\section{The Datalog Synthesis Problem}
\label{sec:Problem}



In this section, we concretely describe the Datalog synthesis problem, and establish some basic complexity results.
 We use the family tree shown in Figure~\ref{fig:Problem:FamilyTree} as a running example.
In Section~\ref{sub:Problem:Background}, we briefly describe how one may compute $\sdrel{samegen}(x, y)$ from
$\sdrel{parent}(x, y)$ using a Datalog program. In Section~\ref{sub:Problem:RuleSelection}, we formalize the
query synthesis problem as that of rule selection.

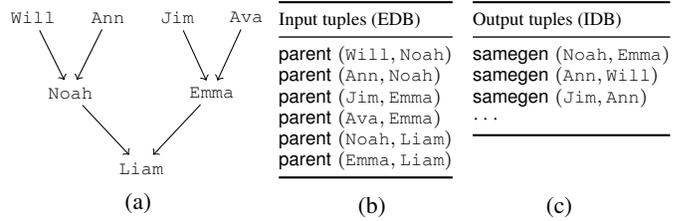
\begin{figure}
  \centering
  \hspace{-0.1in}\begin{subfigure}{0.2\textwidth}
{\scriptsize
    \begin{tikzpicture}
    \node (will) {\dconst{Will}};
    \node [right=0.3 of will] (ann) {\dconst{Ann}};
    \node [right=0.3 of ann] (james) {\dconst{Jim}};
    \node [right=0.3 of james] (ava) {\dconst{Ava}};

    \node (noah) at ($(will)!0.5!(ann) + (0, -1)$) {\dconst{Noah}};
    \draw [->] (will) -- (noah);
    \draw [->] (ann) -- (noah);

    \node (emma) at ($(james)!0.5!(ava) + (0, -1)$) {\dconst{Emma}};
    \draw [->] (james) -- (emma);
    \draw [->] (ava) -- (emma);

    \node (liam) at ($(noah)!0.5!(emma) + (0, -1)$) {\dconst{Liam}};
    \draw [->] (noah) -- (liam);
    \draw [->] (emma) -- (liam);
    \end{tikzpicture}
}
	\vspace{-0.18in}
  \caption{}
  \label{sfig:Problem:FamilyTree:Graph}
  \end{subfigure}
  \begin{subfigure}{0.14\textwidth}
    \scriptsize
    \begin{tabular}{@{}l@{}} \toprule
    Input tuples (EDB) \\ \midrule
    $\drel{parent}(\dconst{Will}, \dconst{Noah})$ \\
    $\drel{parent}(\dconst{Ann}, \dconst{Noah})$ \\
    $\drel{parent}(\dconst{Jim}, \dconst{Emma})$ \\
    $\drel{parent}(\dconst{Ava}, \dconst{Emma})$ \\
    $\drel{parent}(\dconst{Noah}, \dconst{Liam})$ \\
    $\drel{parent}(\dconst{Emma}, \dconst{Liam})$ \\
    \bottomrule \end{tabular}
  \caption{}
  \label{sfig:Problem:FamilyTree:EDB}
  \end{subfigure}
  \begin{subfigure}{0.13\textwidth}
    \scriptsize
    \begin{tabular}{@{}l@{}} \toprule
    Output tuples (IDB) \\ \midrule
    $\drel{samegen}(\dconst{Noah}, \dconst{Emma})$ \\
    $\drel{samegen}(\dconst{Ann}, \dconst{Will})$ \\
    $\drel{samegen}(\dconst{Jim}, \dconst{Ann})$ \\
    $\cdots$ \\
    \bottomrule \\[8pt]\end{tabular}
  \caption{}
  \label{sfig:Problem:FamilyTree:IDB}
  \end{subfigure}
  \hspace*{\fill}
\vspace{-0.1in}
\caption{Example of a family tree~(a), and its representation as a set of input
  tuples~(b). An edge from $x$ to $y$ indicates that $x$ is a parent of $y$, and is
  represented symbolically as the tuple $\sdrel{parent}(x, y)$. The user wishes to realize the relation
  $\sdrel{samegen}(x, y)$, indicating the fact that $x$ and $y$ occur are from the same generation of the
  family~(c).}
\vspace{-0.1in}
\label{fig:Problem:FamilyTree}
\end{figure}


\subsection{Overview of Datalog}
\label{sub:Problem:Background}

The set of tuples inhabiting relation $\sdrel{samegen}(x, y)$ can be computed using the following pair of
\emph{inference rules}, $r_1$ and $r_2$:\linebreak
\vspace{-0.15in}
\begin{alignat*}{1}
  r_1\!\!: \sdrel{samegen}(x, y) & \leads \sdrel{parent}(x, z), \sdrel{parent}(y, z). \\ 
  r_2\!\!: \sdrel{samegen}(x, u) & \leads \sdrel{parent}(x, y), \sdrel{parent}(u, v), \sdrel{samegen}(y, v).  
\end{alignat*}
Rule $r_1$ describes the fact that for all persons $x$, $y$, and $z$, if both $x$ and $y$ are parents of $z$,
then $x$ and $y$ occur at the same level of the family tree. Informally, this rule forms the base of the inductive
definition. Rule $r_2$ forms the inductive step of the definition, and provides that $x$ and $u$ occur in the
same generation whenever they have children $y$ and $v$ who themselves occur in the same generation.

By convention, the relations which are explicitly provided as part of the input are called the EDB, $\mathcal{I} = \{
\sdrel{parent} \}$, and those which need to be computed as the output of the program are called the IDB, $\mathcal{O} =
\{ \sdrel{samegen} \}$. To evaluate this program, one starts with the set of input tuples, and repeatedly applies
rules~$r_1$ and $r_2$ to derive new output tuples. Note that because of the
appearance of the literal $\sdrel{samegen}(y, v)$ on the right side of rule~$r_2$, discovering a
single output tuple may recursively result in the further discovery of additional output tuples. The derivation process
ends when no additional output tuples can be derived, i.e., when the set of conclusions reaches a \emph{fixpoint}.

More generally, we assume a collection of \emph{relations}, $\{P, Q, \dots\}$. Each relation $P$ has an arity $k \in
\N$, and is a set of \emph{tuples}, each of which is of the form $P(c_1, c_2, \dots, c_k)$, for some \emph{constants}
$c_1$, $c_2$, \dots, $c_k$. The Datalog program is a collection of rules, where each rule $r$ is of the form:
\[
  P_h(\bm{u}_h) \leads P_1(\bm{u}_1), P_2(\bm{u}_2), \dots, P_k(\bm{u}_k),
\]
where $P_h$ is an output relation, and $\bm{u}_h$, $\bm{u}_1$, $\bm{u}_2$, \dots, $\bm{u}_k$ are vectors of
\emph{variables} of appropriate length. The variables $\bm{u}_1$, $\bm{u}_2$, \ldots, $\bm{u}_k$, $\bm{u}_h$ appearing
in the rule are implicitly universally quantified, and instantiating them with appropriate constants $\bm{v}_1$,
$\bm{v}_2$, \ldots, $\bm{v}_k$, $\bm{v}_h$ yields a grounded constraint $g$ of the form $P_1(\bm{v}_1) \land
P_2(\bm{v}_2) \land \dots \land P_k(\bm{v}_k) \implies P_h(\bm{v}_h)$: \emph{``If all of the antecedent tuples $A_g =
\{ P_1(\bm{v}_1), P_2(\bm{v}_2), \dots, P_k(\bm{v}_k) \}$ are derivable, then the conclusion $c_g = P_h(\bm{v}_h)$ is
also derivable.''}


\subsection{Synthesis as Rule Selection}
\label{sub:Problem:RuleSelection}


\paragraph{The input-output examples, $I$, $O_+$, and $O_-$.\!\!}
Instead of explicitly providing rules~$r_1$ and $r_2$, the user provides an
example instance of the EDB $I$, and labels a few tuples of the output relation as ``desirable'' or ``undesirable''
respectively:
\[
  O_+ = \{ \sdrel{samegen}(\sdconst{Ann}, \sdconst{Jim}) \}, \text{ and}
\]
\[
  O_- = \{ \sdrel{samegen}(\sdconst{Ava}, \sdconst{Liam}), \sdrel{samegen}(\sdconst{Jim}, \sdconst{Emma}) \},
\]
indicating that $\sdconst{Ann}$ and $\sdconst{Jim}$ are from the same generation, but $\sdconst{Ava}$ and
$\sdconst{Liam}$ and $\sdconst{Jim}$ and $\sdconst{Emma}$ are not. Note that the user is free to label as many
potential output tuples as they wish, and the provided labels $O_+ \union O_-$ need not be exhaustive. The goal of
the program synthesizer is to find a set of rules $R_s$ which produce all of the desired output tuples, i.e., $O_+ \subseteq
R_s(I)$, and none of the undesired tuples, i.e., $O_- \intersection R_s(I) = \emptyset$.


\paragraph{The set of candidate rules, $R$.\!\!}
The user often possesses additional information about the problem instance and the concept being targeted. This
information can be provided to the synthesizer through various forms of \emph{bias}, which direct the search towards
desired parts of the search space. A particularly common form in the recent literature on program synthesis is syntactic: for
example, SyGuS requires a description of the space of potential solution programs as a context-free grammar~%
\cite{SyGuS}, and recent ILP systems such as Metagol \cite{Metagol} require the user to provide a set of higher-order rule templates
(``\emph{metarules}'') and order constraints over predicates and variables that appear in clauses. In this paper, we
assume that the user has provided a large set of candidate rules $R$ and that the target concept $R_s$ is a subset of
these rules: $R_s \subseteq R$.

These candidate rules can express various patterns that could conceivably discharge the problem instance. For example,
$R$ can include the candidate rule $r_s$, ``$\sdrel{samegen}(x, y) \leads \sdrel{samegen}(y, x)$'', which indicates that
the output relation is symmetric, and the candidate rule $r_t$, ``$\sdrel{samegen}(x, z) \leads \sdrel{samegen}(x, y),
\sdrel{samegen}(y, z)$'', which indicates that the relation is transitive. Note that the assumption of the candidate
rule set $R$ uniformly subsumes many previous forms of syntactic bias, including those in SyGuS and Metagol.

Also note that $R$ can often be automatically populated: In our experiments in Section~\ref{sec:Experiments}, we
automatically generate $R$ using the approach introduced by \alps~\cite{ALPS}. We start with seed rules
that follow a simple chain pattern (e.g., ``$P_1(x_1, x_4) \leads P_2(x_1, x_2), P_3(x_2, x_3), P_4(x_3, x_4)$''),
and repeatedly augment $R$ with simple edits to the variables, predicates, and literals of current candidate rules. The
candidate rules thus generated exhibit complex patterns, including recursion, and contain literals of arbitrary arity.
Furthermore, any conceivable Datalog rule can be produced with a sufficiently large augmentation distance.


\begin{problem}[Rule Selection]
\label{prob:Problem}
Let the following be given:
\begin{inparaenum}[(\itshape a\upshape)]
\item a set of input relations, $\mathcal{I}$ and output relations, $\mathcal{O}$,
\item the set of input tuples $I$,
\item a set of positive output tuples $O_+$,
\item a set of negative output tuples $O_-$, and
\item a set of candidate rules $R$ which map the input relations $\mathcal{I}$ to the output relations $\mathcal{O}$.
\end{inparaenum}
Find a set of target rules $R_s \subseteq R$ such that:
\[
  O_+  \subseteq R_s(I), \text{ and ~}   O_-  \intersection R_s(I) = \emptyset.
\]
\end{problem}


Finally, we note that the rule selection problem is NP-hard: this is because multiple rules in the target program $R_s$
may interact in non-compositional ways. The proof proceeds through a straightforward encoding of the satisfiability of a
3-CNF formula, and is provided in the Appendix.

\begin{restatable}{thm}{thmProblemComplexity}
\label{thm:Problem:Complexity}
Determining whether an instance of the rule selection problem, $(\mathcal{I}, \mathcal{O}, I, O_+, O_-, R)$, admits a
solution is NP-hard.
\end{restatable}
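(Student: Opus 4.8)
The plan is to reduce from 3-SAT. Given a 3-CNF formula $\varphi = C_1 \wedge \cdots \wedge C_m$ over variables $x_1,\dots,x_n$, I will build a rule-selection instance that has a solution iff $\varphi$ is satisfiable. Use one input relation $\mathit{Seed}$ with a single tuple, and output relations $V$ (binary, ``variable/value''), $\mathit{Sat}$, $\mathit{Conf}$ (unary); introduce an auxiliary ``dummy'' variable index $x_0$. For each $i \in \{0,1,\dots,n\}$ put two candidate rules $\tau_i\colon V(x_i,\mathsf T) \leads \mathit{Seed}$ and $\phi_i\colon V(x_i,\mathsf F) \leads \mathit{Seed}$; choosing $\tau_i$ (resp.\ $\phi_i$) encodes the assignment $x_i := \mathsf T$ (resp.\ $\mathsf F$). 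For each clause $C_j$, whose three literals use variables $x_{v_{j,1}},x_{v_{j,2}},x_{v_{j,3}}$ with signs $s_{j,k}\in\{\mathsf T,\mathsf F\}$, put three candidate rules $\gamma_{j,k}\colon \mathit{Sat}(j) \leads V(x_{v_{j,k}}, s_{j,k})$. Finally put a single ``consistency'' rule $\delta\colon \mathit{Conf}(w) \leads V(w,\mathsf T), V(w,\mathsf F)$, and make $\delta$ the \emph{only} candidate rule whose head is $\mathit{Conf}$. Set $O_+ = \{\mathit{Sat}(j) : 1 \le j \le m\} \cup \{\mathit{Conf}(x_0)\}$ and $O_- = \{\mathit{Conf}(x_i) : 1 \le i \le n\}$.

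I would then verify correctness through a few structural observations. First, $\mathit{Conf}(x_0)\in O_+$ forces $\delta\in R_s$ (it is the unique rule producing any $\mathit{Conf}$ tuple) and forces $\tau_0,\phi_0\in R_s$ (the unique producers of $V(x_0,\mathsf T)$ and $V(x_0,\mathsf F)$, both needed to fire $\delta$ at $w=x_0$). Second, with $\delta\in R_s$, for a genuine variable $x_i$ ($i\ge 1$) the tuple $\mathit{Conf}(x_i)$ is derivable exactly when both $\tau_i$ and $\phi_i$ are selected, so the requirement $\mathit{Conf}(x_i)\notin R_s(I)$ says precisely ``not both'' --- i.e.\ $R_s$ encodes a consistent partial assignment. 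Third, the clause rules $\gamma_{j,k}$ are always harmless to include (they only ever derive the desired $\mathit{Sat}(j)$ tuples), so $\mathit{Sat}(j)\in R_s(I)$ iff some literal of $C_j$ is made true by the encoded assignment. Extending the encoded partial assignment arbitrarily to a total assignment $\alpha$, a solution $R_s$ gives $\alpha \models \varphi$; conversely, a satisfying $\alpha$ yields the solution $R_s = \{\delta,\tau_0,\phi_0\} \cup \{\gamma_{j,k} : \text{all } j,k\} \cup \{\tau_i \text{ if } \alpha(x_i)=\mathsf T \text{ else } \phi_i\}$. The program is acyclic and of size linear in $n+m$, so the reduction is polynomial.

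The main obstacle --- and the only non-routine point --- is making the consistency constraint genuinely enforceable. The naive encoding would detect the clash $V(x_i,\mathsf T)\wedge V(x_i,\mathsf F)$ with a per-variable rule, but since the synthesizer may pick \emph{any} $R_s\subseteq R$, it would simply omit that rule, leaving inconsistent assignments available (and these only make more clauses satisfiable), which breaks the reduction. The fix is to make the single conflict detector $\delta$ mandatory: by making it the sole rule for $\mathit{Conf}$ and demanding the dummy tuple $\mathit{Conf}(x_0)\in O_+$, every solution is forced to retain $\delta$ together with the forced facts $V(x_0,\mathsf T),V(x_0,\mathsf F)$ that activate it, after which $\delta$ faithfully rules out each real conflict. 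With this device in place the rest is mechanical; I would also double-check that no unintended tuples are derivable (immediate, since the program is acyclic with only three small rule families) and that $\mathcal I$ and $\mathcal O$ stay disjoint as required by Problem~\ref{prob:Problem}.
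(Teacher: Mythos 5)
Your proposal is correct and follows essentially the same route as the paper's proof: a reduction from 3-SAT in which selecting rules encodes a truth assignment, negative examples on a conflict-detecting output relation forbid inconsistent assignments, and a canary positive tuple forces the conflict-detecting rule into every solution (the paper realizes the canary with a special tuple $\sdrel{conflict}(a,a,a)$ and an auxiliary rule $r_a$, where you use a dummy variable $x_0$ whose two value-rules are both forced). The only cosmetic repair needed is that the paper's rule syntax admits only variables in literals, so the constants $x_i$, $\mathsf{T}$, $\mathsf{F}$, and $j$ appearing in your rules should be eliminated via singleton input relations (as the paper does with $\sdrel{var}_v$), which changes nothing essential.
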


\section{A Smoothed Interpretation for Datalog}
\label{sec:Framework}

In this section, we describe the semantics of \Difflog, and present an algorithm to evaluate and automatically
differentiate this continuous-valued extension.


\subsection{Relaxing Rule Selection}
\label{sub:Framework:Relaxation}

The idea motivating \Difflog{} is to generalize the concept of rule selection: instead of a set of binary decisions, we
associate each rule $r$ with a numerical weight $w_r \in [0, 1]$. One possible way to visualize these weights is as the
extent to which they are present in the current candidate program. The central challenge, which we will now address, is
in specifying how the vector of rule weights $\bm{w}$ determine the numerical values $v_t^{R,I}(\bm{w})$ for the output tuples
$t$ of the program. We use notation $v_t(\bm{w})$ when the set of rules $R$ and the set of input tuples $I$
are evident from context.

Every output tuple of a Datalog program is associated with a set of derivation trees, such as those shown in Figure~%
\ref{fig:Framework:Relaxation:Trees}. Let $r_g$ be the rule associated with each instantiated clause $g$ that appears in
the derivation tree $\tau$. We define the value of $\tau$, $v_\tau(\bm{w})$, as the product of the weights of all
clauses appearing in $\tau$, and the value of an output tuple $t$ as being the supremum of the values of all derivation
trees of which it is the conclusion:
\begin{alignat}{1}
  v_\tau(\bm{w})  = \prod_{\text{clause } g \in \tau} w_{r_g}, \text{ and}\ \ \ \ \ \  \label{eq:Framework:Relaxation:Tree} \\
  v_t(\bm{w})  = \sup_{\tau \text{ with conclusion } t} v_\tau(\bm{w}), \label{eq:Framework:Relaxation:Tuple}
\end{alignat}
with the convention that $\sup(\emptyset) = 0$. For example, if $w_{r_1} = 0.8$ and $w_{r_2} = 0.6$, then the weight of
the trees $\tau_1$ and $\tau_2$ from Figure~\ref{fig:Framework:Relaxation:Trees} are respectively $v_{\tau_1}(\bm{w}) =
w_{r_1} = 0.8$ and $v_{\tau_2}(\bm{w}) = w_{r_1} w_{r_2} = 0.48$.

\begin{figure}
  \hspace{-0.2in}
  \begin{subfigure}[b]{0.25\textwidth}
  \resizebox{\textwidth}{!}{
\newcommand{\derivtreesep}{0.6}
\newcommand{\derivtreeseptwo}{1.3}
\setlength{\fboxsep}{0pt}%
\setlength{\fboxrule}{0pt}%
\framebox{
\begin{tikzpicture}[tuple/.style={draw}, 
                    edb/.style={tuple, fill=black!15}, %
                    outTuple/.style={tuple, double}, %
                    clause/.style={},
                    scale=0.85,
                    every node/.style={scale=0.85}]
  \node [edb] (pwn) {$\fdrel{parent}(\fdconst{Will}, \fdconst{Noah})$};
  \node [edb, right=0.3 of pwn] (pan) {$\fdrel{parent}(\fdconst{Ann}, \fdconst{Noah})$};
  \node [clause] (r1wan) at ($(pwn)!0.5!(pan) + (0, -0.8)$) {$r_1(\fdconst{Will}, \fdconst{Ann}, \fdconst{Noah})$};
  \node [tuple] (swa) at ($(r1wan) + (0, -0.8)$) {$\fdrel{samegen}(\fdconst{Will}, \fdconst{Ann})$};
  \begin{pgfonlayer}{background}
  \draw [->] (pwn) -- (r1wan);
  \draw [->] (pan) -- (r1wan);
  \draw [->] (r1wan) -- (swa);
  \end{pgfonlayer}

\end{tikzpicture}
}
}
  \vspace{-0.15in}
  \caption{}
  \label{sfig:Framework:Relaxation:Trees:1}
  \end{subfigure}
  \begin{subfigure}[b]{0.25\textwidth}
  \resizebox{\textwidth}{!}{
\newcommand{\derivtreesep}{0.6}
\newcommand{\derivtreeseptwo}{1.3}
\setlength{\fboxsep}{0pt}%
\setlength{\fboxrule}{0pt}%
\framebox{
\begin{tikzpicture}[tuple/.style={draw}, 
                    edb/.style={tuple, fill=black!15}, %
                    outTuple/.style={tuple, double}, %
                    clause/.style={},
                    scale=0.85,
                    every node/.style={scale=0.85}]
  \node [edb] (pnl1) {$\fdrel{parent}(\fdconst{Noah}, \fdconst{Liam})$};
  \node [edb, right=0.3 of pnl1] (pnl2) {$\fdrel{parent}(\fdconst{Noah}, \fdconst{Liam})$};
  \node [clause] (r1nnl) at ($(pnl1)!0.5!(pnl2) + (0, -0.8)$) {$r_1(\fdconst{Noah}, \fdconst{Noah}, \fdconst{Liam})$};
  \node [tuple] (snn) at ($(r1nnl) + (0, -0.8)$) {$\fdrel{samegen}(\fdconst{Noah}, \fdconst{Noah})$};
  \begin{pgfonlayer}{background}
  \draw [->] (pnl1) -- (r1nnl);
  \draw [->] (pnl2) -- (r1nnl);
  \draw [->] (r1nnl) -- (snn);
  \end{pgfonlayer}

  \node [edb] (pwn) at ($(pnl1 |- snn) + (0, -0.8)$) {$\sdrel{parent}(\fdconst{Will}, \fdconst{Noah})$};
  \node [edb] (pan) at ($(pwn) + (3.3, 0)$) {$\sdrel{parent}(\fdconst{Ann}, \fdconst{Noah})$};
  \node [clause] (r2wnan) at ($(pan -| snn) + (0, -0.8)$) {$r_2(\fdconst{Will}, \fdconst{Noah}, \fdconst{Ann}, \fdconst{Noah})$};
  \node [tuple] (swa) at ($(r2wnan) + (0, -0.8)$) {$\fdrel{samegen}(\fdconst{Will}, \fdconst{Ann})$};
  \begin{pgfonlayer}{background}
  \draw [->] (pwn) -- (r2wnan);
  \draw [->] (pan) -- (r2wnan);
  \draw [->] (snn) -- (r2wnan);
  \draw [->] (r2wnan) -- (swa);
  \end{pgfonlayer}
\end{tikzpicture}
}
}
  \vspace{-0.15in}
  \caption{}
  \label{sfig:Framework:Relaxation:Trees:2}
  \end{subfigure}
\vspace{-0.2in}
\caption{Examples of derivation trees, $\tau_1$~(a) and
  $\tau_2$~(b) induced by various combinations of candidate rules, applied to the
  EDB of familial relationships from Figure~\ref{fig:Problem:FamilyTree}. The input tuples are shaded in grey. We
  present two derivation trees for the conclusion $\sdrel{samegen}(\sdconst{Will}, \sdconst{Ann})$ using rules $r_1$ and
  $r_2$  in Section~\ref{sub:Problem:Background}.}
\label{fig:Framework:Relaxation:Trees}
\end{figure}
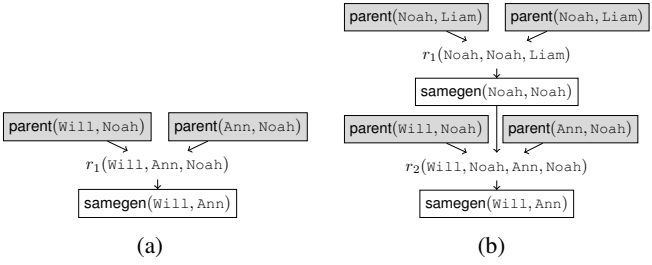

Since $0 \leq w_r \leq 1$, it follows that $v_\tau(\bm{w}) \leq 1$. Also note that a single output tuple may be the
conclusion of infinitely many proof trees (see the derivation structure in Figure~%
\ref{fig:Framework:Relaxation:SymCycle}), leading to the deliberate choice of the supremum in Equation~%
\ref{eq:Framework:Relaxation:Tuple}.

\begin{figure}
\centering
\vspace{-0.05in}
\resizebox{0.35\textwidth}{!}{
\newcommand{\derivtreesep}{0.6}
\newcommand{\derivtreeseptwo}{1.3}
\setlength{\fboxsep}{0pt}%
\setlength{\fboxrule}{0pt}%
\framebox{
\begin{tikzpicture}[tuple/.style={draw}, 
                    edb/.style={tuple, fill=black!15}, %
                    outTuple/.style={tuple, double}, %
                    clause/.style={},
                    scale=0.85,
                    every node/.style={scale=0.85}]
  \node [tuple] (swa) {$\fdrel{samegen}(\fdconst{Will}, \fdconst{Ann})$};
  \node [tuple, right=1 of swa] (saw) {$\fdrel{samegen}(\fdconst{Ann}, \fdconst{Will})$};
  \node [clause] (rsaw) at ($(swa)!0.5!(saw) + (0, 1)$) {$r_s(\fdconst{Ann}, \fdconst{Will})$};
  \node [clause] (rswa) at ($(swa)!0.5!(saw) + (0, -1)$) {$r_s(\fdconst{Will}, \fdconst{Ann})$};
  \node [above=0.5 of swa] (swaroot) {$\cdots$};
  \node [above=0.5 of saw] (sawroot) {$\cdots$};
  \begin{pgfonlayer}{background}
  \draw [->] (swaroot) -- (swa);
  \draw [->] (sawroot) -- (saw);
  \draw [->] (swa) -- (rsaw);
  \draw [->] (rsaw) -- (saw);
  \draw [->] (saw) -- (rswa);
  \draw [->] (rswa) -- (swa);
  \end{pgfonlayer}

\end{tikzpicture}
}
}
\caption{The rule $r_s$, ``$\sdrel{someone}(x, y) \leads \sdrel{samegen}(y, x)$'', induces cycles in the clauses
  obtained at fixpoint. When unrolled into derivation trees such as those in
  Figure~\ref{fig:Framework:Relaxation:Trees}, these cycles result in the production of infinitely many derivation trees
  for a single output tuple.\allowbreak}
\vspace{-0.25in}
\label{fig:Framework:Relaxation:SymCycle}
\end{figure}
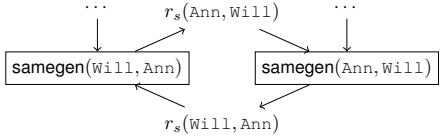

One way to consider Equations~\ref{eq:Framework:Relaxation:Tree} and~\ref{eq:Framework:Relaxation:Tuple} is as replacing
the traditional operations $(\land, \lor)$ and values $\{ \text{true}, \text{false} \}$ of the Boolean semiring with the
corresponding operations $(\times, \max)$ and values $[0, 1]$ of the Viterbi semiring. The study of various
semiring interpretations of database query formalisms has a rich history motivated by the idea of \emph{data
provenance}.
The following result follows from Prop.~5.7 in~\cite{SemiringProvenance}, and concretizes the idea that \Difflog{}
is a refinement of Datalog:
\begin{restatable}{thm}{thmFrameworkRelaxationRefinement}
\label{thm:Framework:Relaxation:Refinement}
Let $R$ be a set of candidate rules, and $\bm{w}$ be an assignment of weights $w_r \in [0, 1]$ to each of them, $r \in
R$. Define $R_s = \{ r \mid w_r \gneq 0 \}$, and consider a potential output tuple $t$. Then, $v_t^{R,I}(\bm{w}) \gneq 0$ iff
$t \in R_s(I)$.
\end{restatable}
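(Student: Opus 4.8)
The plan is to derive both directions directly from the tree-based definition of $v_t^{R,I}$ in Equations~\ref{eq:Framework:Relaxation:Tree}--\ref{eq:Framework:Relaxation:Tuple}, together with the standard characterization of the Datalog fixpoint: a tuple $t$ lies in $R_s(I)$ iff $t$ is the conclusion of some \emph{finite} derivation tree all of whose clauses instantiate rules drawn from $R_s$. An even shorter route, which I would mention as the ``book proof'', is to observe that the map $h\colon [0,1]\to\mathbb B$ sending $0\mapsto\mathrm{false}$ and every $x>0$ to $\mathrm{true}$ is a semiring homomorphism from the Viterbi semiring to the Boolean semiring --- a product is $0$ exactly when some factor is, and a max is $0$ exactly when both arguments are --- and then to invoke Prop.~5.7 of~\cite{SemiringProvenance}: applying $h$ to the provenance weight commutes with query evaluation, and the rule set that $h$ keeps ``on'' is precisely $\{ r : w_r > 0 \} = R_s$. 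The rest of this sketch just unfolds that observation into a self-contained argument.

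For the forward direction I would assume $v_t^{R,I}(\bm w)>0$. Since $v_t$ is a supremum of tree values and $\sup(\emptyset)=0$, the set of derivation trees with conclusion $t$ is nonempty and cannot consist entirely of zero-valued trees, so there is a tree $\tau$ with $v_\tau(\bm w)>0$. As $\tau$ is finite, $v_\tau(\bm w)=\prod_{g\in\tau} w_{r_g}$ is a finite product, and positivity forces $w_{r_g}>0$, i.e.\ $r_g\in R_s$, for every clause $g$ of $\tau$. Thus $\tau$ is a derivation tree for $t$ over the rules of $R_s$, so $t\in R_s(I)$. For the converse I would take a finite derivation tree $\tau$ for $t$ using only rules of $R_s$ (which exists since $t\in R_s(I)$); then every factor of the finite product $v_\tau(\bm w)=\prod_{g\in\tau} w_{r_g}$ is strictly positive, so $v_\tau(\bm w)>0$, and hence $v_t^{R,I}(\bm w)=\sup_{\tau'} v_{\tau'}(\bm w)\ge v_\tau(\bm w)>0$.

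I expect the only real friction to be at the seam between the two semantics rather than in either implication itself: I need the soundness/completeness fact that membership in the least fixpoint $R_s(I)$ is equivalent to the existence of a finite proof tree over $R_s$, and I must take care that the infinitely many proof trees produced by recursive or symmetric rules (as in Figure~\ref{fig:Framework:Relaxation:SymCycle}) do not break anything --- each \emph{individual} tree is still finite, so each $v_\tau$ is a finite product, and a supremum of nonnegative reals is positive iff some element is. With those standard pieces pinned down, both implications are immediate, which is exactly why I would ultimately present the result as a short corollary of the homomorphism $h$ and Prop.~5.7 of~\cite{SemiringProvenance}.
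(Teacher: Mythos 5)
Your proposal is correct and takes essentially the same route as the paper, which proves this theorem only by appeal to Prop.~5.7 of~\cite{SemiringProvenance} (the support homomorphism from the Viterbi semiring to the Boolean semiring) and gives no further detail. Your self-contained unfolding --- positivity of the supremum iff some finite derivation tree has all clause weights positive iff $t$ has a proof over $R_s$ --- is a sound elaboration of exactly that argument.
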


Furthermore, in the Appendix, we show that the output values $v_t(\bm{w})$ is well-behaved in its domain of definition:
\begin{restatable}{thm}{thmFrameworkRelaxationMC}
\label{thm:Framework:Relaxation:MC}
The value of the output tuples, $v_t(\bm{w})$, varies monotonically with the rule weights $\bm{w}$, and is continuous
in the region $0 < w_r < 1$.
\end{restatable}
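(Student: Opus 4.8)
The plan is to prove the two assertions separately: monotonicity follows directly from the product-and-supremum structure of Equations~\ref{eq:Framework:Relaxation:Tree}--\ref{eq:Framework:Relaxation:Tuple}, while continuity reduces to the observation that, although a tuple may have infinitely many derivation trees, the supremum in Equation~\ref{eq:Framework:Relaxation:Tuple} is attained by a tree of bounded size, so $v_t$ is really a maximum of finitely many monomials.

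For \emph{monotonicity}, fix a potential output tuple $t$. The set of derivation trees with conclusion $t$ depends only on $R$ and $I$, not on the weights. If $\bm{w} \le \bm{w}'$ coordinatewise (in particular if a single weight is increased), then for every such tree $\tau$ we have $v_\tau(\bm{w}) = \prod_{g \in \tau} w_{r_g} \le \prod_{g \in \tau} w'_{r_g} = v_\tau(\bm{w}')$, because each factor is nonnegative and does not decrease. Taking the supremum over $\tau$ yields $v_t(\bm{w}) \le v_t(\bm{w}')$.

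For \emph{continuity}, since $I$ is finite and (as is standard for Datalog) rules introduce no new constants, the set $G$ of ground tuples that can appear in any derivation is finite. Call a derivation tree \emph{reduced} if no ground tuple occurs twice along any root-to-leaf path; then its depth is $O(|G|)$ and its branching is bounded by the maximum rule-body length, so for each conclusion $t$ there are only finitely many reduced derivation trees. Given an arbitrary derivation tree $\tau$ with a tuple repeated on some path, I would splice out the loop, replacing the subtree rooted at the higher occurrence by the subtree rooted at the lower one; this yields a strictly smaller legal derivation tree $\tau'$ with the same conclusion whose multiset of clauses is contained in that of $\tau$, and since every factor $w_{r_g}$ lies in $[0,1]$ we get $v_{\tau'}(\bm{w}) \ge v_\tau(\bm{w})$ for all $\bm{w}$. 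Iterating until no repeats remain shows that every tree is dominated by a reduced one, hence
\[
  v_t(\bm{w}) \;=\; \max_{\tau\ \text{reduced, conclusion}\ t}\ \prod_{g \in \tau} w_{r_g},
\]
a maximum of finitely many monomials, each continuous on the cube $[0,1]^{R}$; in particular $v_t$ is continuous on the region $0 < w_r < 1$.

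The main obstacle is making the loop-splicing step airtight: one must verify that the result is still a legal derivation tree (root still $t$, every internal node still a valid ground clause instance, leaves still EDB tuples), that its clause multiset genuinely shrinks so the value cannot drop (this is exactly where $0 \le w_{r_g} \le 1$ is used), and that the reduction terminates. An alternative route that sidesteps the finiteness-of-reduced-trees bookkeeping, and which is where the hypothesis $w_r < 1$ is actually exploited, is to argue locally: near a point $\bm{w}^0$ with $w^0_r < 1$, choose a neighborhood on which $w_r \le c < 1$ uniformly; any derivation tree with more than $N$ clauses then has value at most $c^N$, so the finite maxima $f_N(\bm{w}) = \max\{ v_\tau(\bm{w}) : \tau \text{ has } \le N \text{ clauses}\}$ satisfy $0 \le v_t(\bm{w}) - f_N(\bm{w}) \le c^N$ on that neighborhood, i.e.\ $f_N \to v_t$ uniformly; a uniform limit of continuous functions is continuous at $\bm{w}^0$.
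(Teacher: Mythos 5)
Your proof is correct, but it reaches continuity by a different route than the paper. The paper's proof takes the pre-order traversals of all derivation trees of $t$, observes they form a context-free language $L_t$, invokes Parikh's lemma to get a semilinear set of rule-count vectors, and then notes that within each linear component the supremum is attained at the base vector $\bm{c}_{i0}$ (since adding period vectors only multiplies in more factors from $[0,1]$), so $v_t$ is a maximum of finitely many monomials. Your first route (loop-splicing to reduced trees) proves exactly this ``finitely many dominating trees'' fact by hand: the splice step is the combinatorial content hiding behind the appeal to Parikh's lemma, and your checklist of obligations --- the spliced tree is still legal, its clause multiset shrinks, the factors lie in $[0,1]$, the process terminates --- is precisely right and makes the argument self-contained. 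Like the paper's proof, it actually yields continuity on all of $[0,1]^{R}$, not just the open region. Your second route (truncating to trees with at most $N$ clauses and showing $f_N \to v_t$ uniformly where all weights are bounded by some $c<1$) is genuinely different: it avoids any finiteness-of-dominating-trees bookkeeping, but it is the only one of the three arguments that actually uses the hypothesis $w_r < 1$, and correspondingly it only gives continuity on the region the theorem states rather than on the closed cube. The monotonicity argument is the same one-liner the paper uses. One small point worth making explicit in either route: a tuple with no derivation tree has $v_t \equiv 0$ by the convention $\sup(\emptyset)=0$, which is trivially continuous, so the empty case causes no trouble.
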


We could conceivably have chosen a different semiring in our definitions in Equations~\ref{eq:Framework:Relaxation:Tree}
and~\ref{eq:Framework:Relaxation:Tuple}. One alternative would be to choose a space of events, corresponding to the
inclusion of individual rules, and choosing the union and intersection of events as the semiring operations. This choice
would make the system coincide with ProbLog \cite{ProbLog}. However, the \#P-completeness of inference in probabilistic logics would
make the learning process computationally expensive. Other possibilities, such as the arithmetic semiring $(\mathbb{R},
+, \times, 0, 1)$, would lead to unbounded values for output tuples in the presence of infinitely many derivation
trees.


\subsection{Evaluating and Automatically Differentiating \Difflog{} Programs}
\label{sub:Framework:Evaluation}

Because the set of derivation trees for an individual tuple $t$ may be infinite, note that Equation~%
\ref{eq:Framework:Relaxation:Tuple} is merely \emph{definitional}, and does not prescribe an algorithm to \emph{compute}
$v_t(\bm{w})$. Furthermore, numerical optimization requires the ability to automatically differentiate these values,
i.e., to compute $\nabla_{\bm{w}} v_t$.

The key to automatic differentiation is tracking the \emph{provenance} of each output tuple~\cite{SemiringProvenance}.
Pick an output tuple $t$, and let $\tau$ be its derivation tree with greatest value. For the purposes of this paper, the
provenance of $t$ is a map, $l_t = \{ r \mapsto \#r \text{ in } \tau \mid r \in R \}$, which maps each rule $r$ to the
number of times it appears in $\tau$. Given the provenance $l_t$ of a tuple, the derivative of
$v_t(\bm{w})$ can be readily computed: $d v_t(\bm{w}) / d w_r = l_t(r) v_t(\bm{w}) / w_r$.

\begin{algorithm}[t]\small
\caption{$\Evaluate(R, \bm{w}, I)$, where $R$ is a set of rules, $\bm{w}$ is an assignment of weight to each rule in
  $R$, and $I$ is a set of input tuples.}
\label{alg:Framework:Evaluation}
\begin{enumerate}[leftmargin=*]
\item Initialize the set of tuples in each relation, $F_P \coloneqq \emptyset$, their valuations, $u(t)
  \coloneqq 0$, and their provenance $l(t) = \{ r \mapsto \infty \mid r \in R \}$.
\item For each input relation $P$, update $F_P \coloneqq I_P$, and for each $t \in I_P$, update
   $u(t) \coloneqq 1$ and $l(t) = \{ r \mapsto 0 \mid r \in R \}$.
\item Until $(F, \bm{u})$ reach fixpoint,
  \begin{enumerate}[leftmargin=*,itemsep=0pt]
  \item Compute the immediate consequence of each rule, $r$, ``$P_h(\bm{u}_h) \leads P_1(\bm{u}_1), P_2(\bm{u}_2),
    \dots, P_k(\bm{u}_k)$'':
    \[
        F'_{P_h} = \pi_{\bm{u}_h}(F_{P_1}(\bm{u}_1) \bowtie F_{P_2}(\bm{u}_2) \bowtie \cdots \bowtie F_{P_k}(\bm{u}_k)).
    \]
    Furthermore, for each tuple $t \in F'_{P_h}$, determine all sets of antecedent tuples, $A_g(t) =
    \{ P_1(\bm{v}_1), P_2(\bm{v}_2), \dots, P_k(\bm{v}_k) \}$, which result in its production.
  \item Update $F_{P_h} \coloneqq F_{P_h} \union F'_{P_h}$.
  \item For each tuple $t \in F'_{P_h}$ and each $A_g(t)$:
    \begin{inparaenum}[(\itshape i\upshape)]
    \item compute $u'_t = w_r \prod_{i = 1}^k u(P_i(\bm{v}_i))$, and
    \item if $u(t) < u'_t$, update:
    	\vspace{-0.1in}
	\begin{align*}
        u(t)  \coloneqq u'_t, \text{ and~ }
        l(t)  \coloneqq \{ r \mapsto 1 \} + \sum_{i = 1}^k l(P_i(\bm{v}_i)),
        \end{align*}
      where addition of provenance values corresponds to the element-wise sum.
    \end{inparaenum}
  \end{enumerate}
\item Return $(F, \bm{u}, \bm{l})$.
\end{enumerate}
\end{algorithm}

In Algorithm~\ref{alg:Framework:Evaluation}, we present an algorithm to compute the output values $v_t(\bm{w})$ and
provenance $l_t$, given $R$, $\bm{w}$, and the input tuples $I$. The algorithm is essentially an instrumented
version of the ``naive'' Datalog evaluator~\cite{Alice}. We outline the proof of the following correctness and
complexity claims in the Appendix.
\vspace{-0.07in}
\begin{restatable}{thm}{thmFrameworkEvaluation}
\label{thm:Framework:Evaluation}
Fix a set of input relations $\mathcal{I}$, output relations $\mathcal{O}$, and candidate rules $R$. Let $\Evaluate(R,
\bm{w}, I) = (F, \bm{u}, \bm{l})$. Then:
\begin{inparaenum}[(\itshape a\upshape)]
\item $F = R(I)$, and
\item $\bm{u}(t) = v_t(\bm{w})$.
\end{inparaenum}
Furthermore, $\Evaluate(R, \bm{w}, I)$ returns in time $poly(|I|)$.
\end{restatable}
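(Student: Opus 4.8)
The plan is to prove the three claims in increasing order of difficulty: soundness/completeness of the tuple set $F$, correctness of the valuations $\bm{u}$, and the running-time bound. For part~(a), I would observe that the projection of Algorithm~\ref{alg:Framework:Evaluation} onto the component $F$ is exactly the classical ``naive'' bottom-up Datalog evaluator: step~3(a) computes the immediate-consequence operator $T_R$ by relational joins, step~3(b) accumulates the new tuples, and the valuation/provenance updates in step~3(c) never gate membership in $F$. Hence $F$ evolves as $F_{k+1} = F_k \union T_R(F_k)$, whose limit is the least fixpoint $\mathit{lfp}(T_R) = R(I)$; this is standard~\cite{Alice}, and since all constants are drawn from the finite set occurring in $I$ and in the (fixed) rules $R$, the set of derivable tuples is finite, so the fixpoint is reached after finitely many rounds. (This is also consistent with Theorem~\ref{thm:Framework:Relaxation:Refinement}, which identifies $R_s(I)$ with the tuples of positive value.)

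For part~(b), the key step is a fixpoint characterization of $v_t(\bm{w})$. Define the operator $\Phi$ on $[0,1]^{\text{tuples}}$ by $\Phi(\bm{u})(t) = 1$ if $t \in I$, and $\Phi(\bm{u})(t) = \max_{g : c_g = t} w_{r_g} \prod_{s \in A_g} \bm{u}(s)$ otherwise (with $\max \emptyset = 0$). I would first show that the supremum in Equation~\ref{eq:Framework:Relaxation:Tuple} is attained, in fact by a derivation tree in which no tuple repeats along any root-to-leaf path: given a tree $\tau$ with a repetition at an ancestor node $\nu$ and a descendant node $\nu'$ both labeled $t'$, replacing the subtree at $\nu$ by the subtree at $\nu'$ yields a valid derivation tree of $t'$ whose clause multiset is a sub-multiset of the original, hence (since every $w_r \le 1$) of value at least $v_\tau(\bm{w})$ and strictly smaller size; iterating reaches a repetition-free tree of value $\ge v_\tau(\bm{w})$. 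Since there are only finitely many repetition-free trees (height bounded by the number $N$ of derivable tuples, branching by the maximal rule-body length), the supremum is a maximum and $v_t(\bm{w})$ is the least fixpoint of $\Phi$. It then remains to show Algorithm~\ref{alg:Framework:Evaluation} computes this least fixpoint: the \emph{upper bound} $\bm{u}(t) \le v_t(\bm{w})$ follows by induction over the run, since every value assigned in step~3(c) has the form $w_r \prod_i \bm{u}(P_i(\bm{v}_i))$ and hence equals $v_\tau(\bm{w})$ for an honest derivation tree $\tau$ of $t$; the \emph{lower bound} $\bm{u}(t) \ge v_t(\bm{w})$ follows, once $F$ has stabilized, by induction on the height $h \le N$ of an optimal repetition-free tree for $t$, using that the clause $g^\ast$ realizing its root is then enumerated in step~3(a) in every round, that the antecedent subtrees are themselves optimal and of height $\le h-1$, and that $\bm{u}$ is nondecreasing across rounds. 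Combining the bounds gives $\bm{u}(t) = v_t(\bm{w})$; it also shows $\bm{u}$ reaches its fixpoint within $O(N)$ rounds, so the loop terminates. (The same inductions show $\bm{l}$ records the rule multiplicities of an optimal tree, as used for differentiation.)

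For the running-time bound, note that $R$, $\mathcal{I}$, $\mathcal{O}$ are fixed, so the constant domain has size $O(|I|)$, each fixed-arity relation holds $poly(|I|)$ tuples, and each rule has $poly(|I|)$ ground instances; hence $N = poly(|I|)$. Each iteration of the main loop performs a bounded number of joins over fixed-arity relations (cost $poly(|I|)$) plus, per derived tuple, $O(|R|)$-sized valuation and provenance updates, and the loop runs $O(N) = poly(|I|)$ times, giving total time $poly(|I|)$; I would flag one minor wrinkle, that a provenance entry $\bm{l}(t)(r)$ can in principle be exponential in $N$, so the bound is polynomial in the standard bit model only after noting that such integers have $poly(|I|)$ bits and the additions in step~3(c) are correspondingly cheap. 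The main obstacle I anticipate is the argument in part~(b) that the a~priori infinitary supremum defining $v_t(\bm{w})$ is attained by a short, repetition-free tree and coincides with the least fixpoint of $\Phi$ to which the algorithm converges --- equivalently, bounding the number of rounds before $\bm{u}$ stabilizes; everything else is routine bookkeeping on top of the correctness of naive Datalog evaluation.
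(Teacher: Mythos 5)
Your proof is correct and follows the same core strategy as the paper's: bound the height of a value-optimal derivation tree by the number of derivable output tuples, conclude that the loop in Algorithm~\ref{alg:Framework:Evaluation} stabilizes within polynomially many rounds, and read off both correctness and the running-time bound. The one place you genuinely diverge is in justifying that the supremum in Equation~\ref{eq:Framework:Relaxation:Tuple} is attained at all: the paper simply posits a maximal-value tree $\tau_t$ of shortest height and invokes optimal substructure (implicitly leaning on the Parikh-image argument from the proof of Theorem~\ref{thm:Framework:Relaxation:MC} for existence), whereas your subtree-replacement surgery --- splicing a repeated tuple's deeper occurrence over its shallower one, which can only increase the value since every $w_r \le 1$ --- gives a self-contained, purely combinatorial proof that an optimal repetition-free tree exists. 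That buys a cleaner dependency structure (this theorem no longer needs Parikh's lemma) at the cost of some length. Your explicit two-sided induction against the least fixpoint of $\Phi$, and the observation that provenance counts may be exponentially large in magnitude yet only polynomially large in bit-length, fill in details that the paper's one-paragraph sketch omits; both are correct and strengthen the argument rather than altering it.
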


\section{Formulating the Optimization Problem}
\label{sec:Optimization}

We formulate the \Difflog{} synthesis problem as finding the value of the rule weights $\bm{w}$ which minimizes the
difference between the output values of tuples, $v_t(\bm{w})$, and their expected values, $1$ if $t \in O_+$, and $0$ if
$t \in O_-$. Specifically, we seek to minimize the L2 loss,
\begin{alignat}{1}
  L(\bm{w}) & = \sum_{t \in O_+} (1 - v_t(\bm{w}))^2 + \sum_{t \in O_-} v_t(\bm{w})^2.
\end{alignat}
At the optimum point, Theorem~\ref{thm:Framework:Relaxation:Refinement} enables the recovery of a classical Datalog
program from the optimum value $\bm{w}^*$.

\paragraph{Hybrid optimization procedure.}
In program synthesis, the goal is often to ensure exact compatibility with the provided positive and negative examples.
We therefore seek zeros of the loss function $L(\bm{w})$, and solve for this using Newton's root-finding algorithm:
$\bm{w}^{(i + 1)} = \bm{w}^{(i)} - L(\bm{w}) \nabla_{\bm{w}} L(\bm{w}) / \| \nabla_{\bm{w}} L(\bm{w}) \|^2$. To escape from local
minima and points of slow convergence, we periodically intersperse iterations of the MCMC sampling, specifically simulated annealing. We describe the parameters of the optimization algorithm in the Appendix.

\paragraph{Separation-guided search termination.}
After computing each subsequent $\bm{w}^{(i)}$, we examine the provenance values for each output tuple to determine
whether the current position can directly lead to a solution to the rule selection problem. In particular, we compute
the sets of desirable---$R_+ = \{ r \in l(t) \mid t \in O_+ \}$---and undesirable rules---$R_- = \{ r \in l(t) \mid t
\in O_- \}$, and check whether $R_+ \intersection R_- = \emptyset$. If these sets are separate, then we examine
the candidate solution $R_+$, and return if it satisfies the output specification.

\section{Empirical Evaluation}
\label{sec:Experiments}

Our experiments address the following aspects of \Difflog{}:
\begin{enumerate}[leftmargin=*,itemsep=0pt]
\item effectiveness at synthesizing Datalog programs and comparison to the state-of-the-art tool \alps~\cite{ALPS},
which already outperforms existing ILP tools~\cite{Zaatar,Metagol} and supports relations with arbitrary arity, sophisticated joins, and predicate invention;
\item the benefit of employing MCMC search compared to a purely gradient-based method; and
\item scaling with number of training labels and rule templates.
\end{enumerate}

We evaluated \Difflog{} on a suite of 34 benchmark problems~\cite{ALPS}. This collection draws benchmarks from three
different application domains:
\begin{inparaenum}[(\itshape a\upshape)]
\item knowledge discovery,
\item program analysis, and
\item relational queries.
\end{inparaenum}
The characteristics of the benchmarks are shown in Table~\ref{tbl:benchmarks} of the Appendix. These benchmarks involve
up to 10 target rules, which could be recursive and involve relations with arity up to 6. The implementation of
\Difflog{} comprises 4K lines of Scala code. We use Newton's root-finding method for continuous optimization and apply
MCMC-based random sampling every 30 iterations. All experiments were conducted on Linux machines with Intel Xeon 3GHz
processors and 64GB memory.

\subsection{Effectiveness}
\label{sec:experiment:effectiveness}
We first evaluate the effectiveness of \Difflog{} and compare it with \alps{}.
The running time and solution of \Difflog{} depends on the random choice of initial weights.
\Difflog{} exploits this characteristic by running multiple synthesis processes for each problem in parallel.
The solution is returned once one of the parallel processes successfully synthesizes a correct Datalog program.
We populated 32 processes in parallel and measured the running time until the first solution was found.
The timeout is set to 1 hour for each problem.

\begin{table}[t!]
\caption{Characteristics of benchmarks and performance of \Difflog{} compared to \alps.
\textbf{Rel} shows the number of relations.
\textbf{Rule} represents the number of expected (\textbf{Exp}) and candidate rules (\textbf{Cnd}).
\textbf{Tuple} shows the number of input and output tuples.
\textbf{Iter} and \textbf{Smpl} report the number of iterations and MCMC samplings. \textbf{Time} shows the running time of \Difflog{} and \alps{} in seconds.}
\vspace{-0.1in}
\centering
\small
\resizebox{\columnwidth}{!}{
\begin{tabular}{@{}l@{\ }r@{\ \ }r@{\ \ }r@{\ \ }r@{\ \ }r@{\ \ }r@{\ \ }r@{\ \ }r@{\ \ }r@{}}
\toprule
\multirow{2}{*}{\textbf{Benchmark}} &
\multirow{2}{*}{\textbf{Rel}} &
\multicolumn{2}{c}{\textbf{Rule}} &
\multicolumn{2}{c}{\textbf{Tuple}} &
\multicolumn{3}{c}{\Difflog{}} &
\multicolumn{1}{@{}r@{}}{\alps{}} \tabularnewline
\cmidrule(l{0pt}r{4pt}){3-4}
\cmidrule(l{0pt}r{4pt}){5-6}
\cmidrule(l{0pt}r{4pt}){7-9}\cmidrule(l{0pt}r{1pt}){10-10}
& & \textbf{Exp} & \textbf{Cnd} &  \textbf{In} & \textbf{Out}  & \textbf{Iter} & \textbf{Smpl} & \textbf{Time} & \textbf{Time} \tabularnewline
\midrule
\texttt{inflamation}	&	7	&	2	&	134	&	640	&	49	&	1	&	0	&	\textbf{1}	&	2	\tabularnewline	
\texttt{abduce}	&	4	&	3	&	80	&	12	&	20	&	1	&	0	&	$<$ \textbf{1}	&	2	\tabularnewline	
\texttt{animals}	&	13	&	4	&	336	&	50	&	64	&	1	&	0	&	\textbf{1}	&	40	\tabularnewline	
\texttt{ancestor}	&	4	&	4	&	80	&	8	&	27	&	1	&	0	&	$<$ \textbf{1}	&	14	\tabularnewline	
\texttt{buildWall}	&	5	&	4	&	472	&	30	&	4	&	5	&	1	&	\textbf{7}	&	67	\tabularnewline	
\texttt{samegen}	&	3	&	3	&	188	&	7	&	22	&	1	&	0	&	\textbf{2}	&	12	\tabularnewline	
\texttt{scc}	&	3	&	3	&	384	&	9	&	68	&	6	&	1	&	\textbf{28}	&	60	\tabularnewline	\midrule
\texttt{polysite}	&	6	&	3	&	552	&	97	&	27	&	17	&	1	&	\textbf{27}	&	84	\tabularnewline	
\texttt{downcast}	&	9	&	4	&	1,267	&	89	&	175	&	5	&	1	&	\textbf{30}	&	1,646	\tabularnewline	
\texttt{rv-check}	&	5	&	5	&	335	&	74	&	2	&	1,205	&	41	&	\textbf{22}	&	195	\tabularnewline	
\texttt{andersen}	&	5	&	4	&	175	&	7	&	7	&	1	&	0	&	\textbf{4}	&	27	\tabularnewline	
\texttt{1-call-site}	&	9	&	4	&	173	&	28	&	16	&	4	&	1	&	\textbf{4}	&	106	\tabularnewline	
\texttt{2-call-site}	&	9	&	4	&	122	&	30	&	15	&	25	&	1	&	\textbf{53}	&	676	\tabularnewline	
\texttt{1-object}	&	11	&	4	&	46	&	40	&	13	&	3	&	1	&	\textbf{3}	&	345	\tabularnewline	
\texttt{1-type}	&	12	&	4	&	70	&	48	&	22	&	3	&	1	&	\textbf{4}	&	13	\tabularnewline	
\texttt{escape}	&	10	&	6	&	140	&	13	&	19	&	2	&	1	&	\textbf{1}	&	5	\tabularnewline	
\texttt{modref}	&	13	&	10	&	129	&	18	&	34	&	1	&	0	&	\textbf{1}	&	2,836	\tabularnewline	\midrule
\texttt{sql-10}	&	3	&	2	&	734	&	10	&	2	&	7	&	1	&	\textbf{11}	&	41	\tabularnewline	
\texttt{sql-14}	&	4	&	3	&	23	&	11	&	6	&	1	&	0	&	$<$ \textbf{1}	&	54	\tabularnewline	
\texttt{sql-15}	&	4	&	2	&	186	&	50	&	7	&	902	&	31	&	875	&	\textbf{11}	\tabularnewline	
\bottomrule
\end{tabular}}
\vspace{-0.1in}
\label{tbl:performance}
\end{table}

Table~\ref{tbl:performance} shows the running of \Difflog{} and \alps.
We excluded 14 out of 34 benchmarks that both \Difflog{} and \alps{} solve within a second (13 benchmarks) or run out of time (1~benchmark).
\Difflog{} outperforms \alps{} on 19 of the remaining 20 benchmarks.
In particular, \Difflog{} is orders of magnitude faster than \alps{} on most of the program analysis benchmarks.
Meanwhile, the continuous optimization may not be efficient when the problem has many local minimas
and the space is not convex. 
For example, \texttt{sql-15} has a lot of sub-optimal solutions that generate not only all positive output tuples but also some negative ones.

\begin{figure}[t]
\center
\includegraphics[width=0.78\linewidth]{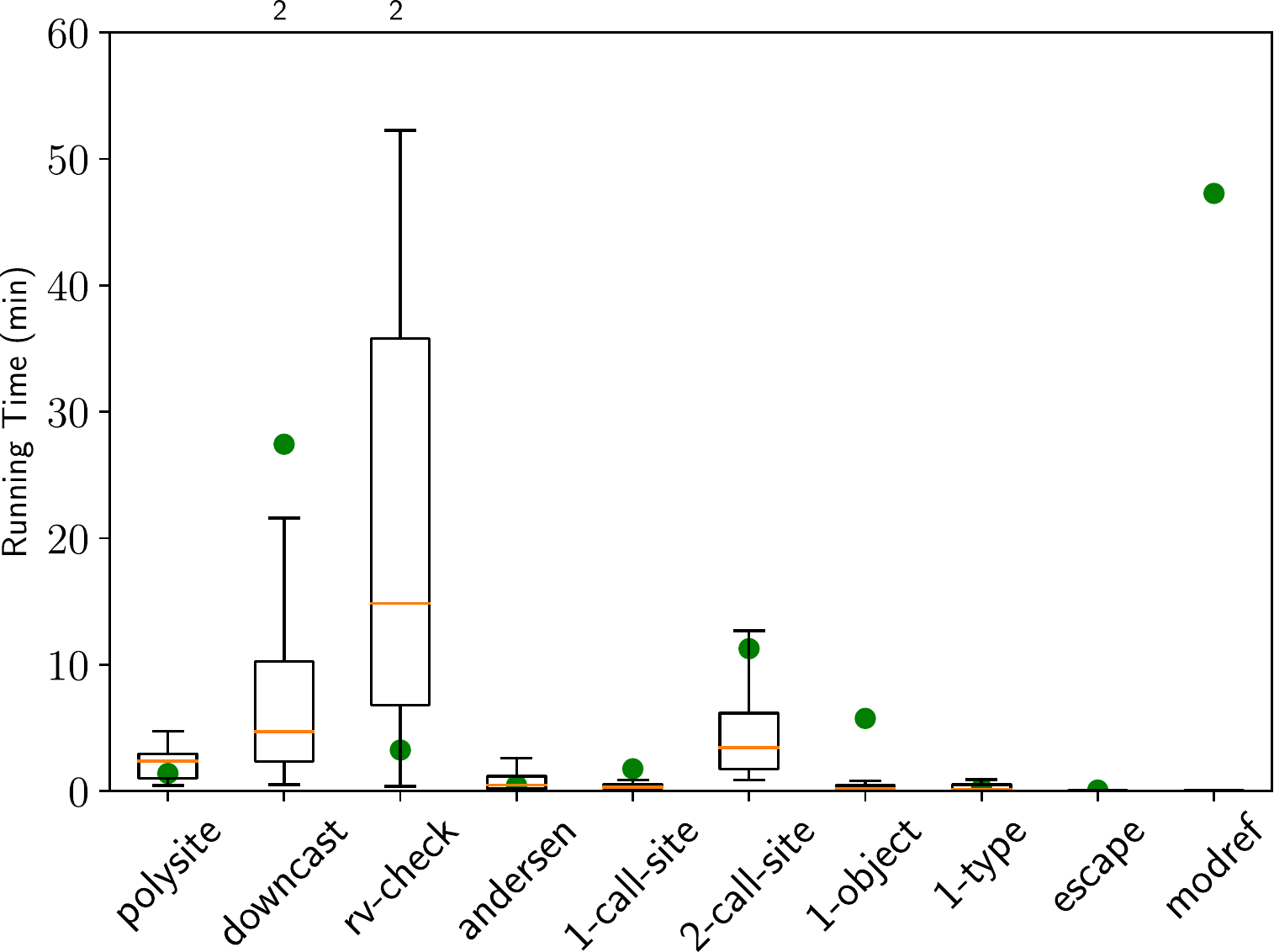}
\vspace{-0.1in}
\caption{Distribution of \Difflog{}'s running time from 32 parallel runs. The numbers on top represents the number of timeouts. Green circles represent the running time of \alps.}
\label{fig:distribution}
\end{figure}

Figure~\ref{fig:distribution} depicts the distribution of running time on the program analysis benchmarks.\footnote{We provide results for the other domains in the Appendix.}
The results show that \Difflog{} is always able to find solutions for all the benchmarks except for 2 timeouts for \texttt{downcast} and \texttt{rv-check} respectively.
Also note that even the median running time of \Difflog{} is smaller than the running time of \alps{} for 6 out of 10 benchmarks.

\subsection{Impact of MCMC-based sampling}
\begin{table}[t!]\small\center
\caption{Effectiveness of MCMC  sampling in terms of the best running time (\textbf{Time}) and the number of timeouts (\textbf{T/O}).}
\vspace{-0.1in}
\resizebox{0.47\textwidth}{!}{
\begin{tabular}{l@{\quad}rr@{\quad}rrrr}
\toprule
\multirow{2}{*}{\textbf{Benchmark}} &
\multicolumn{2}{c@{\quad}}{\textbf{Hybrid}} &
\multicolumn{2}{c}{\textbf{Newton}}  &
\multicolumn{2}{c}{\textbf{MCMC}}
\tabularnewline
\cmidrule(l{0pt}r{10pt}){2-3}
\cmidrule(l{0pt}r{6pt}){4-5}
\cmidrule(l{6pt}r{6pt}){6-7}
& \textbf{Time} & \textbf{T/O}
& \textbf{Time} & \textbf{T/O}
& \textbf{Time} & \textbf{T/O}  \tabularnewline\midrule
\texttt{polysite}	&	27	&	0	&	10	&	0	&	12	&	0	\tabularnewline	
\texttt{downcast}	&	30	&	2	&	16	&	9	&	70	&	7	\tabularnewline	
\texttt{rv-check}	&	22	&	2	&	N/A	&	32	&	N/A	&	32	\tabularnewline	
\texttt{andersen}	&	4	&	0	&	3	&	10	&	4	&	9	\tabularnewline	
\texttt{1-call-site}	&	4	&	0	&	8	&	1	&	N/A	&	32	\tabularnewline	
\texttt{2-call-site}	&	53	&	0	&	27	&	17	&	42	&	9	\tabularnewline	
\texttt{1-object}	&	3	&	0	&	3	&	17	&	N/A	&	32	\tabularnewline	
\texttt{1-type}	&	4	&	0	&	3	&	18	&	N/A	&	32	\tabularnewline	
\texttt{escape}	&	1	&	0	&	1	&	17	&	N/A	&	32	\tabularnewline	
\texttt{modref}	&	1	&	0	&	1	&	4	&	N/A	&	32	\tabularnewline	\midrule
\textbf{Total}	&		&	4	&		&	125	&		&	217	\tabularnewline	
\bottomrule
\end{tabular}
}
\label{tbl:mcmc}
\vspace{-0.1in}
\end{table}

We next evaluate the impact of our MCMC-based sampling by comparing the performance of three variants of \Difflog{}:
\begin{inparaenum}[\itshape a)\upshape]
\item a version that uses both Newton's method and the MCMC-based technique (\textbf{Hybrid}), which is the same as in Section~\ref{sec:experiment:effectiveness},
\item a version that uses only Newton's method (\textbf{Newton}), and
\item a version that uses only the MCMC-based technique (\textbf{MCMC}).
\end{inparaenum}
Table~\ref{tbl:mcmc} shows the running time of the best run and the number of timeouts among 32 parallel runs
for these three variants.
The table shows that our hybrid approach strikes a good balance between exploitation and exploration. 
In many cases, \textbf{Newton} gets stuck in local minima; for example, it cannot find any solution for \texttt{rv-check} within one hour.
\textbf{MCMC} cannot find any solution for 6 out of 10 benchmarks.
Overall, \textbf{Hybrid} outperforms both \textbf{Newton} and \textbf{MCMC} by reporting 31$\times$  and 54$\times$ less number of timeouts, respectively.

\subsection{Scalability}

\noindent
We next evaluate the scalability of \Difflog{}, which is essentially affected by two factors: 
the number of templates and the size of training data. 
Our general observation is that increasing either of these does not significantly increase the
effective running time (i.e., the best of 32 parallel runs).

Figure~\ref{fig:vary_templates} shows how running time increases with the number of 
templates.\footnote{We ensure that a smaller set is always a subset of a larger one.}
As shown in Figure~\ref{fig:2call}, the running time distribution for \twocallsite{} tends to have 
larger variance when the number of templates increases, but the best running time (out of 32 i.i.d samples)
only increases modestly.
The running time distribution for \downcast{}, shown in Figure~\ref{fig:downcast}, has a similar trend
except that smaller number of templates does not always lead to smaller variance or faster running time.
For instance, the distribution in the setting with 180 templates has larger variance and
median than distributions in the subsequent settings with larger number of templates. 
This indicates that the actual combination of templates also matters. 

\begin{figure}
  \vspace{-0.2in}
  \centering 
  \begin{subfigure}[b]{0.49\linewidth}
	\includegraphics[width=0.98\linewidth]{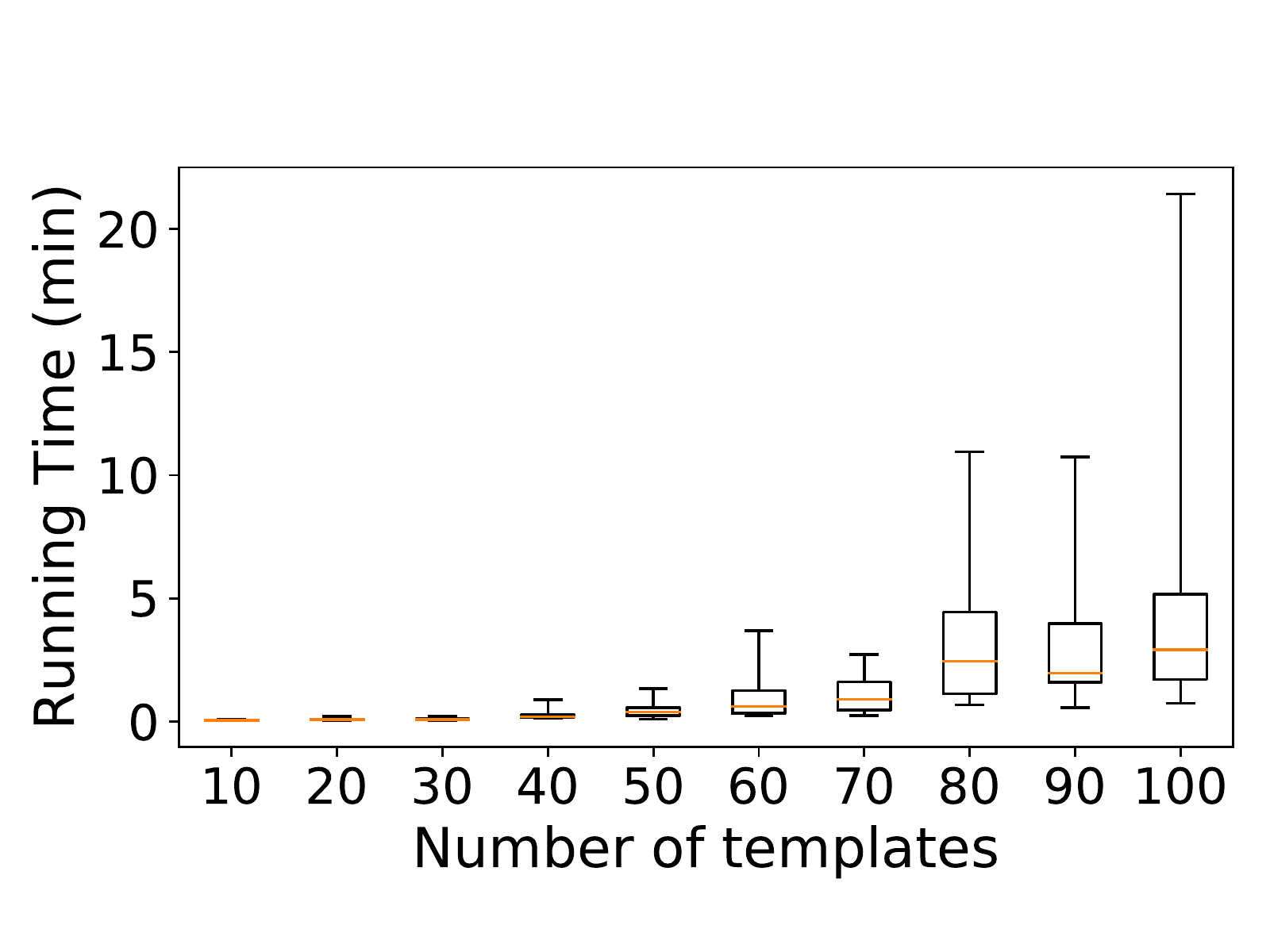}
	\vspace{-5pt}
  	\caption{\twocallsite}
  	\label{fig:2call}
  \end{subfigure}
  \begin{subfigure}[b]{0.49\linewidth}
	\includegraphics[width=0.98\linewidth]{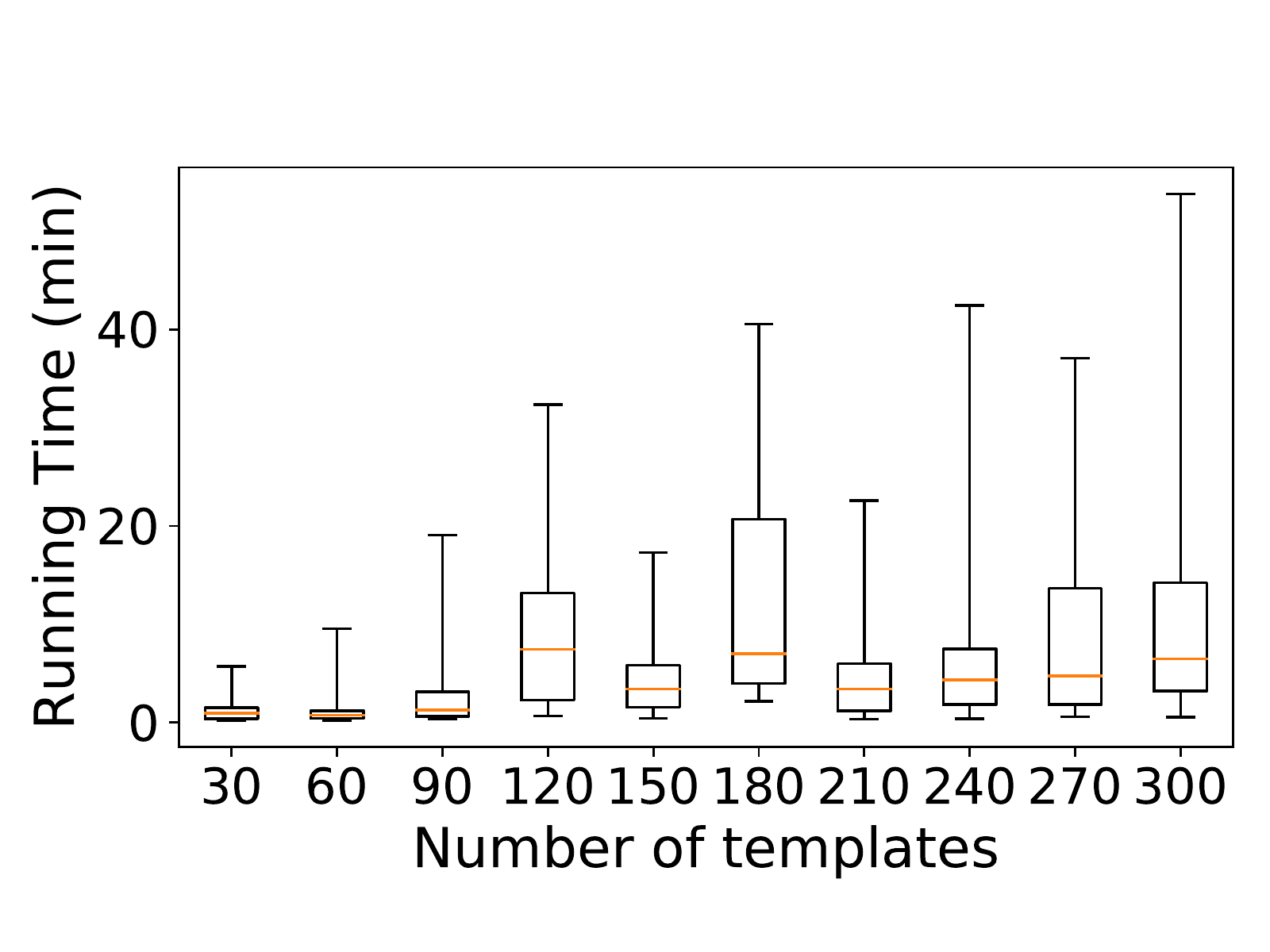}
	\vspace{-5pt}
  	\caption{\downcast}
  	\label{fig:downcast}
  \end{subfigure}
  \vspace{-5pt}
  \caption{Running time distributions for \twocallsite{} and \downcast{} with different number of templates.}
  \label{fig:vary_templates}
\end{figure}


\begin{figure}
  \vspace{-0.2in}
  \centering
  \begin{subfigure}[b]{0.49\linewidth}
	\includegraphics[width=0.98\linewidth]{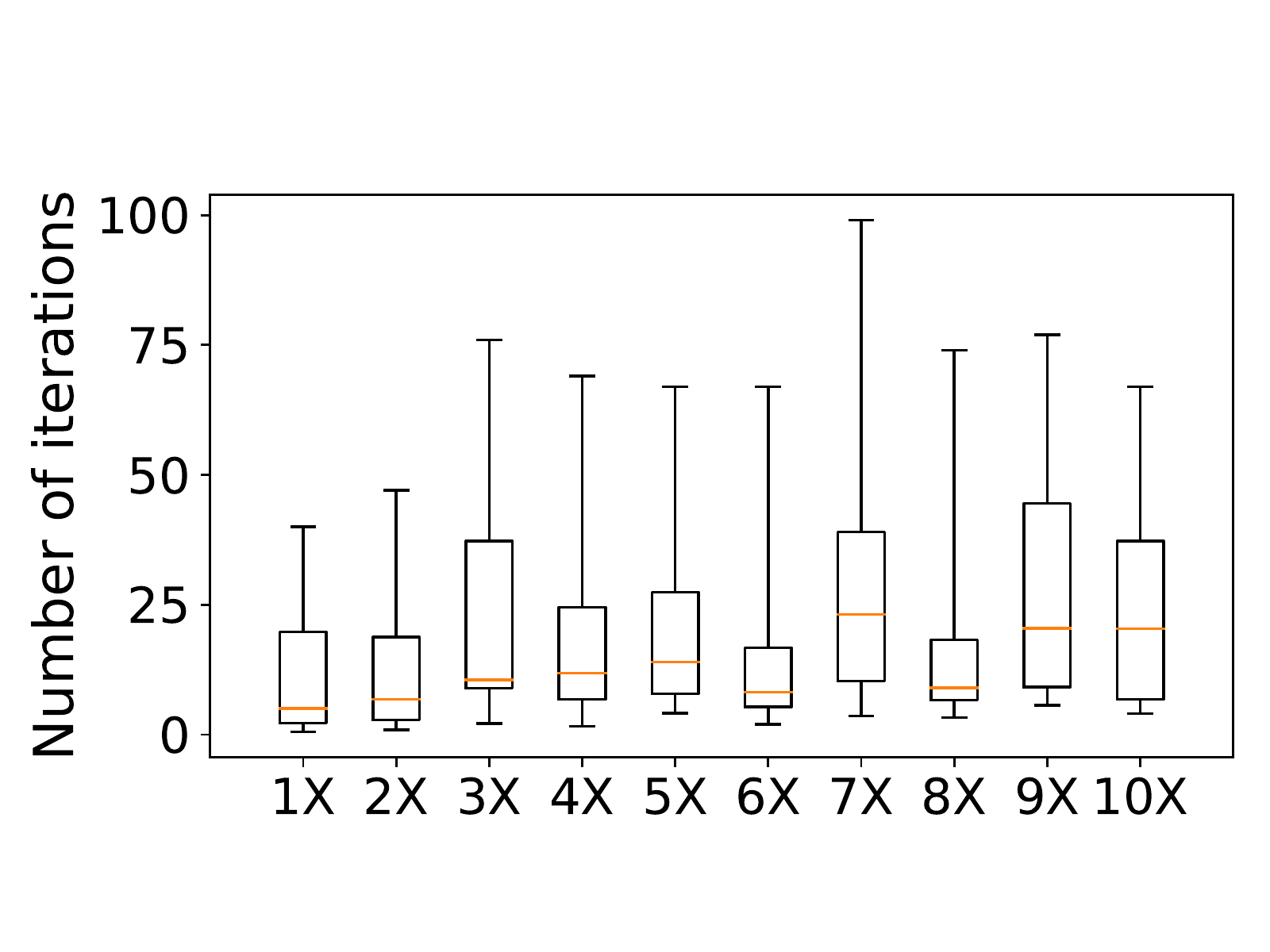}
	\vspace{-15pt}
  	\caption{}
  	\label{fig:andersen_iter}
  \end{subfigure}
  \begin{subfigure}[b]{0.49\linewidth}
	\includegraphics[width=0.98\linewidth]{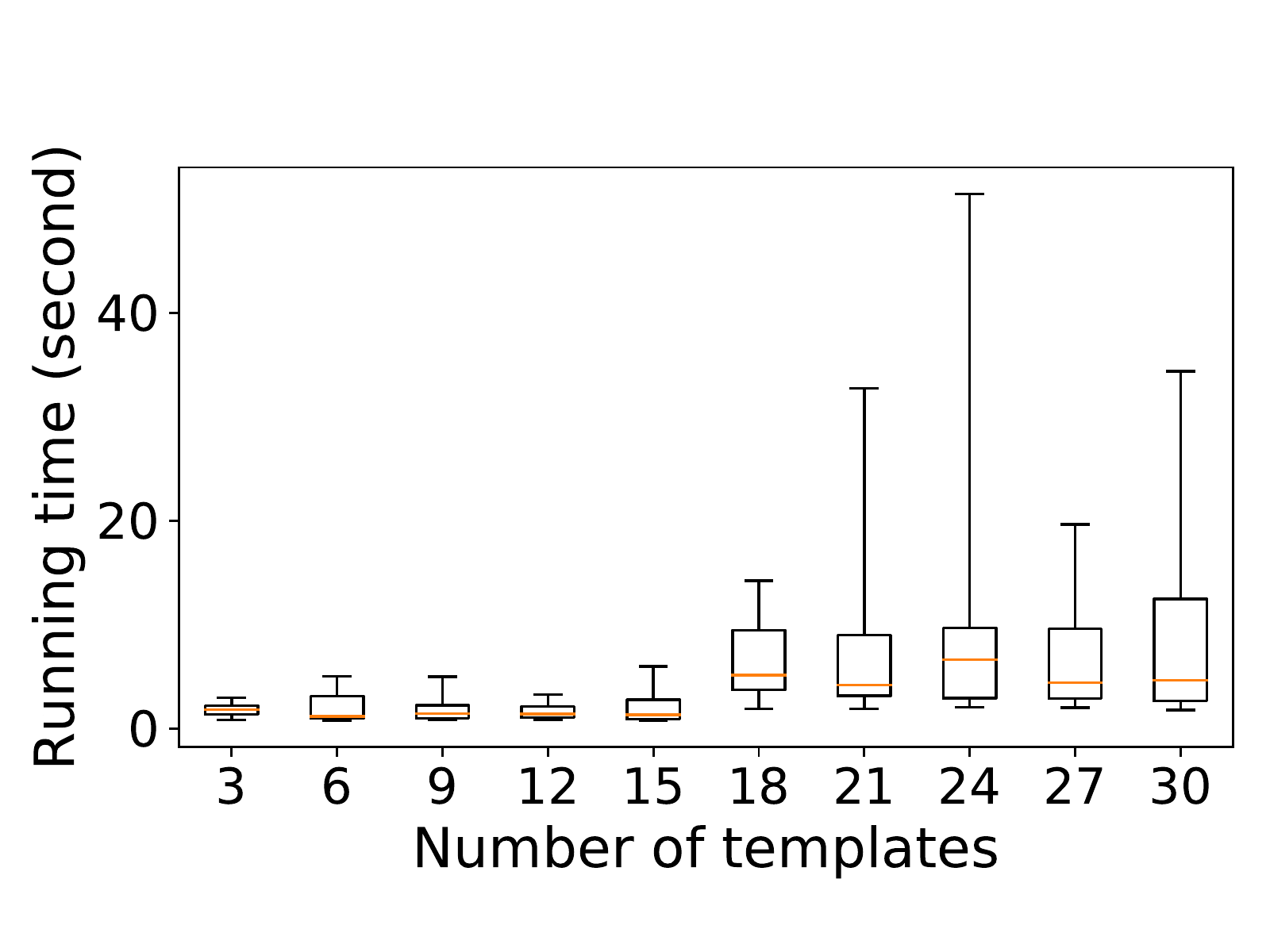}
	\vspace{-15pt}
  	\caption{}
  	\label{fig:andersen_dist}
  \end{subfigure}
  	\vspace{-8pt}
	\caption{Performance of \Difflog{} on \andersen{} with different sizes of data: (a) the distribution of number of iterations, (b) the distribution of running time.}
	\label{fig:vary_tuples}
\vspace{-0.1in}
\end{figure}

The size of training data is another important factor affecting the performance of \Difflog{}. 
Figure~\ref{fig:andersen_iter} shows the distribution of the number of iterations for \andersen{} with different sizes of training data.
According to the results, the size of training data does not necessarily affect the number of iterations of \Difflog{}.
Meanwhile, Figure~\ref{fig:andersen_dist} shows that the end-to-end running time increases with more training data.
This is mainly because more training data impose more cost on the \Difflog{} evaluator.
However, the statistics shows that the running time increases linearly with the size of  data.



\section{Conclusion}
\label{sec:Conclusion}

We presented a technique to synthesize Datalog programs by numerical optimization. The central idea was to formulate the
problem as an instance of rule selection, and then relax classical Datalog to a refinement named \Difflog. In a
comprehensive set of experiments, we show that by learning a \Difflog{} program and then recovering a classical Datalog
program, we can achieve significant speedups over the state-of-the-art Datalog synthesis systems. In future, we plan to
extend the approach to other synthesis problems such as SyGuS and to applications in differentiable programming.

\bibliographystyle{named}
\bibliography{references}
\clearpage

\appendix
\section{Proof of Theorems~\ref{thm:Problem:Complexity},~\ref{thm:Framework:Relaxation:MC}
  and~\ref{thm:Framework:Evaluation}}
\label{app:Problem}


\thmProblemComplexity*
\begin{proof}
Consider a 3-CNF formula $\varphi$ over a set $V$ of variables:
\[
  \varphi = (l_{11} \lor l_{12} \lor l_{13}) \land
            (l_{21} \lor l_{22} \lor l_{23}) \land
            \dots \land
            (l_{k1} \lor l_{k2} \lor l_{k3}),
\]
be the given 3-CNF formula, where each literal $l_{ij}$ appearing in clause $c_i$ is either a variable, $v_{ij} \in V$,
or its negation, $\lnot v_{ij}$. Assume that there are no trivial clauses in $\varphi$, which simultaneously contain
both a variable and its negation. We will now encode its satisfiability as an instance of the rule selection problem.

\begin{enumerate}
\item For each variable $v \in V$, define the input relations:
  \begin{alignat}{1}
    \sdrel{pos}_v & = \{ (c) \mid v \in c \}, \text{ and} \label{eq:Problem:Complexity:Pos} \\
    \sdrel{neg}_v & = \{ (c) \mid \lnot v \in c \}, \label{eq:Problem:Complexity:Neg}
  \end{alignat}
  consisting of all one-place tuples $\sdrel{pos}_v(c)$ and $\sdrel{neg}_v(c)$ indicating whether the variable $v$
  occurs positively or negatively in the clause $c$.

\item Also, for each variable $v$, define the input relation $\sdrel{var}_v$ which is inhabited by a single tuple
  $\sdrel{var}_v(v)$:
  \begin{alignat}{1}
    \sdrel{var}_v & = \{ (v) \}. \label{eq:Problem:Complexity:Var}
  \end{alignat}

\item The idea is to set up the candidate rules so that subsets of chosen rules correspond to assignments of true /
  false values to the variables of $\varphi$. Let $C_2(c, v)$ be an output relation: we are setting up the problem so
  that if the tuple $C_2(c, v)$ is derivable in the synthesized solution, then there is a satisfying assignment of
  $\varphi$ where clause $c$ is satisfied due to the assignment to variable $v$.

\item For each variable $v$, create a pair of candidate rules $r_v$ and $r_{\lnot v}$ as follows:
  \begin{alignat*}{1}
    r_v & = \text{``} C_2(c, v') \leads \sdrel{pos}_v(c), \sdrel{var}_v(v') \text{''}, \text{ and} \\
    r_{\lnot v} & = \text{``} C_2(c, v') \leads \sdrel{neg}_v(c), \sdrel{var}_v(v') \text{''}.
  \end{alignat*}
  Selecting the rule $r_v$ corresponds to assigning the value true to the corresponding variable $v$, and selecting the
  rule $r_{\lnot v}$ corresponds to assigning it the value false.

\item To prevent the simultaneous choice of rules $r_v$ and $r_{\lnot v}$, we set up the three-place input relation
  $\sdrel{conflict}(c, c', v)$, which indicates that the reason for the simultaneous satisfaction of clauses $c$ and
  $c'$ cannot be a contradictory variable $v$:
  \begin{alignat}{1}
    \sdrel{conflict} & = \{ (c, c', v) \mid v \in c \text{ and } \lnot v \in c' \} \union \{ (a, a, a) \},
    \label{eq:Problem:Complexity:Conflict}
  \end{alignat}
  where $a$ is some new constant not seen before. We will motivate its necessity while defining the canary output
  relation $\sdrel{error}$ next.

\item We detect the simultaneous selection of a pair of rules $r_v$ and $r_{\lnot v}$ using the rule $r_e$:
  \begin{alignat*}{1}
    r_e = \text{``} \sdrel{error}(c, c', v) \leads C_2(c, v), C_2(c', v), \sdrel{conflict}(c, c', v) \text{''}
  \end{alignat*}
  Here $\sdrel{error}$ is a three-place output relation indicating the selection of an inconsistent assignment. We would
  like to force the synthesizer to choose the error-detecting rule $r_e$. The selection of the rule $r_e$, the presence
  of the input tuple $\sdrel{conflict}(a, a, a)$, and the selection of the rule $r_a$:
  \[
    r_a = \text{``} C_2(x, x) \leads \sdrel{conflict}(x, x, x) \text{''}
  \]
  is the only way to produce the output tuple $\sdrel{error}(a, a, a)$, which we will mark as desired.

\item The output tuple $C_2(c, v)$ indicates the satisfaction of the clause $c$ because of the assignment to variable
  $v$. We use the presence of such tuples to mark the clause $c$ itself as being satisfied: let $C_1(c)$ be a one-place
  output relation, and include the rule:
  \[
    r_c = \text{``} C_1(c) \leads C_2(c, v) \text{''}.
  \]

\item In summary, let the rule selection problem $P_\varphi = (\mathcal{I}, \mathcal{O}, I, O_+, O_-, R)$ be defined as
  follows:
  \begin{enumerate}
  \item $\mathcal{I} = \{ \sdrel{var}_v, \sdrel{pos}_v, \sdrel{neg}_v \mid v \in V \} \union \{ \sdrel{conflict} \}$.
  \item $\mathcal{O} = \{ C_2, C_1, \sdrel{error} \}$.
  \item Define the set of input tuples, $I$, using equations~\ref{eq:Problem:Complexity:Pos},
    \ref{eq:Problem:Complexity:Neg}, \ref{eq:Problem:Complexity:Var}, and~\ref{eq:Problem:Complexity:Conflict}.
  \item $O_+ = \{ C_1(c) \mid \text{clause } c \in \varphi \} \union \{ \sdrel{error}(a, a, a) \}$.
  \item $O_- = \{ \sdrel{error}(c, c', v) \mid \text{clauses } c, c' \allowbreak \text{ and variable } v \allowbreak
    \text{ occurring in } \varphi \}$.
  \item $R = \{ r_v, r_{\lnot v} \mid v \in V \} \union \{ r_e, r_a, r_c \}$.
  \end{enumerate}
\end{enumerate}
Given a 3-CNF formula $\varphi$, the corresponding instance $P_\varphi$ of the rule selection problem can be constructed
in polynomial time. Furthermore, it can be seen that, by construction, $P_\varphi$ admits a solution iff $\varphi$ is
satisfiable. It follows that the rule selection problem is NP-hard.
\end{proof}


Next, we turn our attention to Theorem~\ref{thm:Framework:Relaxation:MC}. The first part of the claim follows
immediately from the definition in Equation~\ref{eq:Framework:Relaxation:Tuple}. We therefore focus on the second part:
Note that the proof of continuity does not immediately follow from Equation~\ref{eq:Framework:Relaxation:Tuple} because
the supremum of an infinite set of continuous functions need not itself be continuous. It instead depends on the
observation that there is a finite subset of dominating derivation trees whose values suffice to compute $v_t(\bm{w})$.

\thmFrameworkRelaxationMC*
\begin{proof}
Fix an assignment of rule weights $\bm{w}$. Next, focus on a specific output tuple $t$, and consider the set of all its
derivation trees $\tau$. Let $\sigma_\tau$ be a pre-order traversal over its nodes. For example, for the tree $\tau_1$
in Figure~\ref{sfig:Framework:Relaxation:Trees:1}, we obtain
$\sigma_{\tau_1} = \sdrel{samegen}(\sdconst{Will}, \sdconst{Ann}), \allowbreak
                   r_1(\sdconst{Will}, \sdconst{Ann}, \sdconst{Noah}), \allowbreak
                   \sdrel{parent}(\sdconst{Will}, \sdconst{Noah}), \allowbreak
                   \sdrel{parent}(\sdconst{Ann}, \sdconst{Noah})$.
It can be shown that the set of all pre-order traversals, $\sigma_\tau$, over all derivation trees $\tau$ forms a
context-free grammar $L_t$.

We are interested in trees $\tau$ with high values $v_\tau(\bm{w})$, where the value of a tree depends only on the
number of occurrences of each rule $r$. It therefore follows that the weight $v_\tau(\bm{w})$ is completely specified
by the Parikh image, $\{ r \mapsto \#r \text{ in } \tau \}$, which counts the number of occurrences of each symbol in
each string of the language $L_t$. From Parikh's lemma, we conclude that this forms a semilinear set. Let
\[
  p(L_t) = \bigcup_{i = 1}^m (\bm{c}_{i0} + \sum_{j = 1}^n \bm{c}_{ij})
\]
be the Parikh image of $L_t$, and for each $i \in \{ 1, 2, \dots, m \}$, let $\tau_i$ be the derivation tree
corresponding to the rule count $\bm{c}_{i0}$. It follows that:
\[
  v_t(\bm{w}) = \sup_{\tau \text{ with conclusion } t} v_\tau(\bm{w}) = \max_{i = 1}^m v_{\tau_{i}}(\bm{w}).
\]
We have reduced the supremum over an infinite set of continuous functions to the maximum of a finite set of continuous
functions. It follows that $v_t(\bm{w})$ varies continuously with $\bm{w}$.
\end{proof}


Finally, we turn to the proof of Theorem~\ref{thm:Framework:Evaluation}.

\thmFrameworkEvaluation*
\begin{proof}
The first part of the following result follows from similar
arguments as the correctness of the classical algorithm.
We briefly describe the proof of the second claim. For each output tuple $t$, consider all of its derivation trees
$\tau_{hi}$ with maximal value, and identify the tree $\tau_t$ with shortest height among these. All first-level
sub-trees of $\tau_t$ must themselves possess the shortest-height-maximal-value property, so that their height is bounded
by the number of output tuples. Since the $(F, \bm{u}, \bm{l})$-loop in step~3 of Algorithm~%
\ref{alg:Framework:Evaluation} has to hit a fixpoint within as many iterations, and since each iteration runs in
polynomial time, the claim about running time follows.
\end{proof}

\section{Learning Details}
\label{app:Params}

We initialize $\bm{w}$ by uniformly sampling weights $w_r \in [0.25, 0.75]$. We apply MCMC sampling after every 30
iterations of Newton's root-finding method, and sample new weights as follows:
\[ X \sim U(0,1) \]
\[ w_{new} = \bigg \{
  \begin{tabular}{ll}
  $w_{old} \sqrt{2X} $ & if $X < 0.5$\\
  $ 1 - (1-w_{old}) \sqrt{2(1-X)}$ & otherwise.
  \end{tabular} \]
The temperature $T$ used in simulated annealing is as follows:
\[ T = \frac{1.0}{C * log(5 + \#iter)}\]
where C is initially 0.0001 and $\#iter$ is the number of iterations.
We accept the newly proposed sample with probability
\[ p_{acc} = \min(1, \pi_{new} / \pi_{curr}), \]
where $\pi_{curr} = \exp(-L_2(\bm{w}_{curr}) / T)$ and $\pi_{new} = \exp(-L_2(\bm{w}_{new}) / T)$.

\section{Benchmarks and Experimental Results}
\label{app:Benchmarks}
The characteristics of benchmarks are shown in Table~\ref{tbl:benchmarks}.
Figure~\ref{fig:appendix:distribution} shows that the distribution of running time for the remaining benchmarks.
\begin{table}[t!]
\caption{Benchmarks characteristics.
\textbf{Rec} and \textbf{\#Rel} shows the programs that require recursive rules, and the number of relations.
\textbf{\#Rules} represents the number of expected (\textbf{Exp}) and candidate rules (\textbf{Cand}).
\textbf{\#Tuples} shows the number of input and output tuples.}
\centering
\small
\begin{tabular}{l@{\ \ }lrrrrrrr}
\toprule
 &
\multirow{2}{*}{\textbf{Benchmark}} &
\multirow{2}{*}{\textbf{Rec}}&
\multirow{2}{*}{\textbf{\#Rel}} &
\multicolumn{2}{c}{\textbf{\#Rules}} &
\multicolumn{2}{c}{\textbf{\#Tuples}} \tabularnewline
\cmidrule(l{2pt}r{2pt}){5-6}\cmidrule(l{2pt}r{2pt}){7-8}
& & & & \textbf{Exp} & \textbf{Cand} & \textbf{In} & \textbf{Out} \tabularnewline
\midrule
\multirow{8}{*}{\rotatebox[origin=c]{90}{\sf Knowelge Discovery}}	&	\texttt{inflamation}	&		&	7	&	2	&	134	&	640	&	49	\tabularnewline	
	&	\texttt{abduce}	&		&	4	&	3	&	80	&	12	&	20	\tabularnewline	
	&	\texttt{animals}	&		&	13	&	4	&	336	&	50	&	64	\tabularnewline	
	&	\texttt{ancestor}	&	\checkmark	&	4	&	4	&	80	&	8	&	27	\tabularnewline	
	&	\texttt{buildWall}	&	\checkmark	&	5	&	4	&	472	&	30	&	4	\tabularnewline	
	&	\texttt{samegen}	&	\checkmark	&	3	&	3	&	188	&	7	&	22	\tabularnewline	
	&	\texttt{path}	&	\checkmark	&	2	&	2	&	6	&	7	&	31	\tabularnewline	
	&	\texttt{scc}	&	\checkmark	&	3	&	3	&	384	&	9	&	68	\tabularnewline	\midrule
\multirow{11}{*}{\rotatebox[origin=c]{90}{\sf Program Analysis}}	&	\texttt{polysite}	&		&	6	&	3	&	552	&	97	&	27	\tabularnewline	
	&	\texttt{downcast}	&		&	9	&	4	&	1,267	&	89	&	175	\tabularnewline	
	&	\texttt{rv-check}	&		&	5	&	5	&	335	&	74	&	2	\tabularnewline	
	&	\texttt{andersen}	&	\checkmark	&	5	&	4	&	175	&	7	&	7	\tabularnewline	
	&	\texttt{1-call-site}	&	\checkmark	&	9	&	4	&	173	&	28	&	16	\tabularnewline	
	&	\texttt{2-call-site}	&	\checkmark	&	9	&	4	&	122	&	30	&	15	\tabularnewline	
	&	\texttt{1-object}	&	\checkmark	&	11	&	4	&	46	&	40	&	13	\tabularnewline	
	&	\texttt{1-type}	&	\checkmark	&	12	&	4	&	70	&	42	&	15	\tabularnewline	
	&	\texttt{1-obj-type}	&	\checkmark	&	13	&	5	&	12	&	48	&	22	\tabularnewline	
	&	\texttt{escape}	&	\checkmark	&	10	&	6	&	140	&	13	&	19	\tabularnewline	
	&	\texttt{modref}	&	\checkmark	&	13	&	10	&	129	&	18	&	34	\tabularnewline	\midrule
\multirow{15}{*}{\rotatebox[origin=c]{90}{\sf Relational Queries}}	&	\texttt{sql-01}	&		&	4	&	1	&	33	&	21	&	2	\tabularnewline	
	&	\texttt{sql-02}	&		&	2	&	1	&	16	&	3	&	1	\tabularnewline	
	&	\texttt{sql-03}	&		&	2	&	1	&	70	&	4	&	2	\tabularnewline	
	&	\texttt{sql-04}	&		&	3	&	2	&	7	&	9	&	6	\tabularnewline	
	&	\texttt{sql-05}	&		&	3	&	1	&	17	&	12	&	5	\tabularnewline	
	&	\texttt{sql-06}	&		&	3	&	2	&	9	&	9	&	9	\tabularnewline	
	&	\texttt{sql-07}	&		&	2	&	1	&	52	&	5	&	5	\tabularnewline	
	&	\texttt{sql-08}	&		&	4	&	3	&	206	&	6	&	2	\tabularnewline	
	&	\texttt{sql-09}	&		&	4	&	2	&	52	&	6	&	1	\tabularnewline	
	&	\texttt{sql-10}	&		&	3	&	2	&	734	&	10	&	2	\tabularnewline	
	&	\texttt{sql-11}	&		&	7	&	4	&	170	&	30	&	2	\tabularnewline	
	&	\texttt{sql-12}	&		&	6	&	3	&	32	&	36	&	7	\tabularnewline	
	&	\texttt{sql-13}	&		&	3	&	1	&	10	&	17	&	7	\tabularnewline	
	&	\texttt{sql-14}	&		&	4	&	3	&	23	&	11	&	6	\tabularnewline	
	&	\texttt{sql-15}	&		&	4	&	2	&	186	&	50	&	7	\tabularnewline	
	\bottomrule
\end{tabular}
\label{tbl:benchmarks}
\end{table}

\begin{figure}[t]
\center
\includegraphics[width=0.9\linewidth]{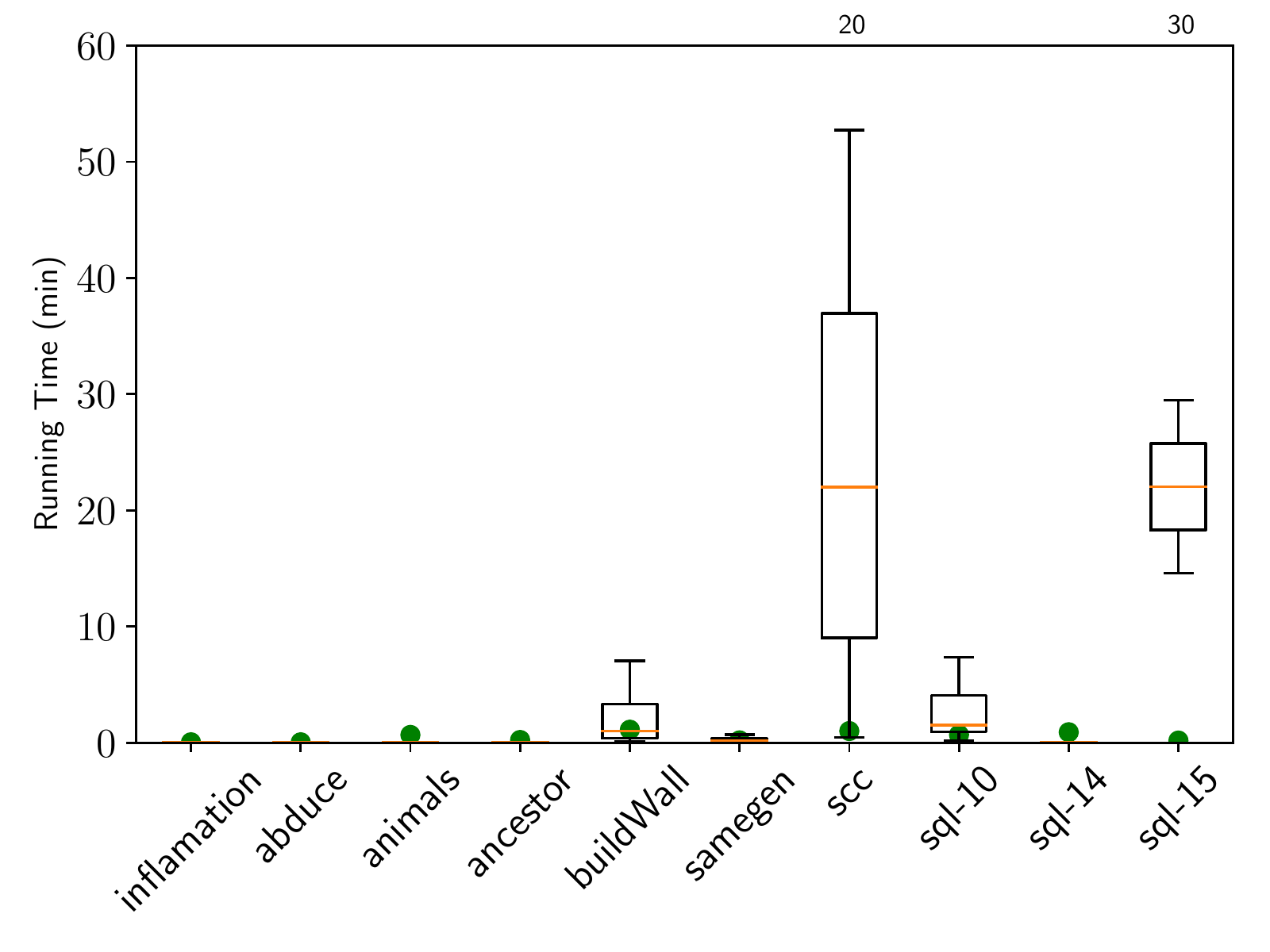}
\caption{Distribution of \Difflog{}'s running time from 32 parallel runs. The numbers on top represents the number of timeouts. Green circles represent the running time of \alps.}
\label{fig:appendix:distribution}
\end{figure}

\end{document}